\documentclass[AMA,Times1COL]{WileyNJDv5}

\articletype{RESEARCH ARTICLE}

\startpage{1}

\usepackage{microtype}
\usepackage{subcaption}
\usepackage{tikz}
\usetikzlibrary{shapes,arrows,positioning,calc}

\begin{document}

\title{Koopman-Based Robust Model Predictive Control for Nonlinear Systems with Stochastic Intermittent Measurements}

\author[1,2]{Guanhua Liu$^\dagger$}

\author[1,2]{Tong Wu$^\dagger$}

\author[2]{Lixian Zhang}

\author[2]{Weifeng Du}

\author[3]{Minghao Han}

\authormark{LIU \textsc{et al.}}
\titlemark{KOOPMAN-BASED ROBUST MODEL PREDICTIVE CONTROL}

\address[1]{\orgdiv{State Key Laboratory of Robotics and Systems}, \orgname{Harbin Institute of Technology}, \orgaddress{\country{China}}}

\address[2]{\orgdiv{School of Astronautics}, \orgname{Harbin Institute of Technology}, \orgaddress{\country{China}}}

\address[3]{\orgdiv{Nanyang Environment and Water Research Institute (NEWRI)}, \orgname{Nanyang Technological University}, \orgaddress{\country{Singapore}}}

\corres{Corresponding author: Lixian Zhang, School of Astronautics, Harbin Institute of Technology, Harbin, China. E-mail: lixianzhang@hit.edu.cn \\[6pt]
\textsuperscript{$\dagger$}~Guanhua Liu and Tong Wu are co-first authors}

\abstract[Abstract]{Intermittent state measurements pose fundamental challenges to model predictive control of constrained nonlinear systems because
prediction uncertainty grows during feedback outages and measurement-triggered resets disrupt nominal state propagation, potentially
compromising closed-loop stability and recursive feasibility. This paper develops a Koopman-based stochastic MPC framework with
probabilistically truncated soft constraints. Specifically, a Lipschitz-constrained deep Koopman model provides a linear latent predictor,
enabling computationally efficient online optimization. The intermittent measurement process is modeled as a two-mode discrete-time
Markov chain, yielding a unified Markov jump error model for open-loop propagation and measurement-triggered resets. Under numerically
verifiable sufficient conditions, the prediction error is shown to be mean-square ultimately bounded, and an explicit uniform second-moment
bound is obtained. A distribution-free probabilistic error radius is then constructed for a prescribed confidence level and used to truncate
dropout-dependent constraint tightening. An exact-penalty soft-constraint mechanism accommodates reset-induced jumps and prolonged
dropouts. Under the stated terminal compatibility and bounded-disturbance conditions, recursive feasibility and mean-square ultimate
boundedness of the closed-loop regulation error are established. Numerical simulations on a visual-servoing tracking task corroborate these
theoretical results and demonstrate effective tracking under stochastic measurement unavailability.}

\keywords{Koopman Operator, Intermittent Measurements, Markov Jump Linear Systems, Model Predictive Control}


\maketitle

\section{Introduction}
\label{sec:introduction}
To systematically handle multi-variable state and control constraints, Model Predictive Control (MPC) has emerged as a premier framework owing to its proactive optimization-based nature \cite{nubert2020safe,xin2026stochastic}.
A standard assumption in conventional MPC paradigms is the continuous and uninterrupted availability of feedback, which may be violated in practical systems due to intermittent measurements caused by communication dropouts or detection failures~\cite{li2013output,liang2023tracking}.
A quintessential example of such intermittent feedback under stringent state boundaries is found in robotic navigation or visual servoing \cite{zhang2021robust}. In such systems, the feedback loop is closed via extracted image features.
However, this process frequently suffers from unpredictable target occlusions and narrow geometric limits dictated by the camera's field of view (FOV) \cite{hajiloo2015robust,yang2024rmpcbased}.
Guaranteeing that system trajectories respect safety boundaries in the presence of prolonged measurement dropouts necessitates the deployment of robust MPC mechanisms \cite{polver2025robust}.

Nonlinear Model Predictive Control (NMPC) has emerged as a prevailing solution for the control of constrained nonlinear systems.
Despite the theoretical appeal of robust NMPC, its application is severely hindered by the underlying non-convex optimization, which imposes prohibitive computational burdens on real-time solvers~\cite{kamath2026physicsinformed,wang2024physicsinformed}.
To overcome this computational complexity bottleneck, a linear modeling paradigm based on the Koopman operator has garnered widespread interest \cite{surana2016koopman,ke2025koopmanbased,duran-siguenza2025control,li2026recedinghorizon}.
By lifting the original nonlinear dynamics into a high-dimensional functional space, the Koopman operator provides a globally linear representation of the system, enabling the deployment of linear control techniques, thereby alleviating the computational burden \cite{rajkumar2025realtime,bruder2021advantages}.
However, traditional Koopman operator methods rely on the manual selection or heuristic design of basis functions, posing challenges for the construction of accurate approximations with theoretical guarantees for complex nonlinear systems.
Recent advancements in data-driven methodologies, particularly neural networks and deep learning, have facilitated the discovery of expressive latent dictionaries from data, enhancing representation accuracy and establishing the Deep Koopman control framework \cite{zhang2025koopmanbased,han2022desko,shi2022deep,ng2023datadriven,lyu2025koopmanbased,han2026mako,lian2026deep}.

To manage safety boundaries within linear systems, tube-based MPC has been extensively investigated \cite{hu2022robust,abbas2023linear,dong2025robustswitched,ping2022tubebased,zhang2022robust}.
Nevertheless, the aforementioned methods commonly rely on continuous feedback availability or worst-case error propagation.
Under prolonged stochastic measurement dropouts, the resulting uncertainty tubes may expand substantially, leading to conservative constraint tightening and possible loss of recursive feasibility.

To address the stochastic measurement unavailability, it is essential to characterize the evolutionary dynamics and impacts of feedback dropouts \cite{liang2023tracking,wu2024efficient}.
Stochastic MPC (SMPC) represents a prominent paradigm for enforcing safety control under random events, which leverages a probabilistically bounded framework, thereby alleviating the severe conservatism induced by the worst-case assumptions inherent to robust control paradigms \cite{bernardini2012stabilizing,mcallister2023inherent}.
Concurrently, incorporating data-driven modeling methodologies into the SMPC framework has attracted significant attention \cite{breschi2023datadriven}, further catalyzing the development of deep stochastic Koopman and switched Koopman control architectures \cite{han2022desko,ju2026model,qi2025learning}.
Notably, recent advances have integrated learning-based Koopman operators with Markov jump systems to capture stochastic mode transitions within a latent space~\cite{11312626}.
However, while Koopman operators have been integrated with robust and stochastic frameworks in recent advances, the simultaneous presence of complex nonlinearities,
stochastic measurement dropouts, and tight safety constraints has not been fully addressed.
How to preserve recursive feasibility of the Koopman-based MPC solver under prolonged measurement dropouts remains insufficiently explored.

Motivated by the above observations, this paper develops a Koopman-based stochastic MPC framework for constrained nonlinear systems
with intermittent state measurements. The proposed framework characterizes the prediction uncertainty induced by measurement loss and
recovery and incorporates the resulting probabilistic error bound into a soft-constrained MPC formulation with truncated constraint tightening.
The main contributions are as follows:

\begin{itemize}
    \item A unified Markov jump model is formulated for the Koopman prediction error, capturing measurement-triggered resets and open-loop
propagation in the presence of bounded model residuals and measurement noise.
    \item Numerically verifiable conditions are derived for the mean-square ultimate boundedness of the prediction error, together with an explicit
bound on its asymptotic second moment. Combined with the bounded initial error, this result yields a uniform-in-time second-moment bound
and, consequently, a distribution-free probabilistic error radius for a prescribed confidence level.
    \item A probabilistically truncated soft-constrained Koopman MPC scheme is developed to limit dropout-dependent constraint tightening and
accommodate measurement-triggered state resets. Under the stated terminal compatibility and bounded-disturbance conditions, recursive
feasibility and mean-square ultimate boundedness of the closed-loop regulation error are established, while the online optimization problem
remains convex.
\end{itemize}

\emph{Notation:}
$\mathbb{R}^n$, $\mathbb{R}^{n \times m}$, and $\mathbb{Z}_{\ge 0}$ denote the $n$-dimensional Euclidean space, the set of $n \times m$ real matrices, and the set of non-negative integers, respectively.
System variables are indexed by discrete-time $k \in \mathbb{Z}_{\ge 0}$ and future prediction steps $i \in \mathbb{Z}_{\ge 0}$.
$I$ denotes the identity matrix of appropriate dimensions.
For a vector $x$, $\|x\|_2$ and $\|x\|_\infty$ denote its $L_2$-norm and $L_\infty$-norm, respectively.
For a symmetric positive definite matrix $Q$, $\|x\|_Q^2 \triangleq x^T Q x$.
The notation $X \succ 0$ ($X \succeq 0$) implies that a symmetric matrix $X$ is positive definite (positive semi-definite).
$\lambda_{\min}(X)$ and $\lambda_{\max}(X)$ denote the minimum and maximum eigenvalues of a symmetric matrix $X$, respectively.
The operators $\oplus$ and $\ominus$ correspond to the Minkowski sum and Pontryagin set difference, respectively.
The mathematical expectation of a random variable is denoted by $\mathbb{E}[\cdot]$, and $\operatorname{Pr}(\cdot)$ represents the probability measure.
For a multivariate Gaussian distribution, $\mathcal{N}(\mu, \Sigma)$ denotes a normal distribution with mean $\mu$ and covariance $\Sigma$.

\section{Problem Formulation}\label{sec:problem_formulation}

\subsection{Koopman-based System Modeling and Nominal MPC Formulation}
\label{subsec:koopman_modeling}

Consider the discrete-time nonlinear dynamics of a system restricted by compact state boundaries:
\begin{equation}
    x_{k+1} = F(x_k, u_k)
\end{equation}
where $x_k \in \mathcal{X} \subset \mathbb{R}^{n_x}$ represents the true state vector in the state space, and $u_k \in \mathcal{U} \subset \mathbb{R}^{n_u}$ is the control input.

According to Koopman operator theory,
the dynamic evolution of a nonlinear autonomous system can be globally captured by the linear evolution of observable functions defined on the state space.
Let $\mathcal{H}$ denote an infinite-dimensional Hilbert space comprising scalar-valued observable functions $g: \mathcal{X} \rightarrow \mathbb{R}$.
To accommodate the control inputs, the Koopman operator $\mathcal{K}$ can be extended to an input-affine lifting framework.
Instead of embedding the control variables into the observable function, the control $u_k$ is treated as an exogenous forcing term.
Consequently, the evolution of any state-dependent observable $g(x_k) \in \mathcal{H}$ is mapped into a high-dimensional functional space via a state-only encoder, formulated as:
\begin{equation}\label{eq:Koopman}
    g(x_{k+1}) = (\mathcal{K} g)(x_k) \approx \mathcal{K}_A g(x_k) + \mathcal{K}_B u_k.
\end{equation}

Although the operator $\mathcal{K}$ preserves exact linearity,
its infinite-dimensional nature renders it intractable for system synthesis.
In practical applications,
it is necessary to identify a finite set of basis functions
$\mathbf{z}_k = [g_1(x_k), \dots, g_{n_z}(x_k)]^T \in \mathbb{R}^{n_z}$
to construct an approximate finite-dimensional linear invariant subspace.
However, for general nonlinear dynamics, deriving a set of basis functions that achieves lossless finite-dimensional truncation is difficult,
and the resulting residual approximation errors typically lack theoretical bounds.

To overcome this challenge,
a deep Koopman architecture \cite{shi2022deep} with Lipschitz constraints is introduced.
This architecture globally lifts the nonlinear system dynamics into a high-dimensional latent space
$\mathcal{Z} \subset \mathbb{R}^{n_z}$.
The latent state vector generated by the network encoder is defined as:
\begin{equation}\label{eq:encoder}
    z_k = \text{Encoder}(x_k) = [\phi(x_k)^T, \psi(x_k)^T]^T \in \mathbb{R}^{n_z}
\end{equation}

By introducing an identity skip-connection,
the first $n_x$ dimensions of the latent state are enforced to be identical to the original state,
formulated as $\phi(x_k) \equiv x_k$.
Consequently, the global Lipschitz constant of this coordinate mapping is locked at $L_\phi = 1$.
The remaining $n_{\psi}$ dimensions represent the nonlinear observable features,
$\psi(\cdot) \in \mathbb{R}^{n_\psi}$, which are parameterized by a lightweight multi-layer perceptron (MLP).

Spectral Normalization is applied across all hidden layers to maintain its global Lipschitz continuity.
By constraining the maximum singular value of each layer's weight matrix $W$ such that $\sigma_{\max}(W) = 1$,
and leveraging the 1-Lipschitz property of the ReLU activation function,
the global Lipschitz constant of the nonlinear feature mapping is bounded by $L_\psi \le 1$.

Thus, under the aforementioned structural consistency and spectral normalization constraints, the globally linearized nominal evolution equation in the latent space is expressed as:
\begin{equation}
    z_{k+1} = A z_k + B u_k
\end{equation}

To avoid the additional computational overhead of nonlinear decoding,
a constant sparse projection matrix $C = [I_{n_x}, \mathbf{0}]$ is employed,
where $\mathbf{0}$ is an $n_x \times n_\psi$ zero matrix.
Thereby, the reconstructed state can be obtained as $\hat{x}_k = C z_k = \phi(x_k) = x_k$,
which indicates that the original states are directly extracted from the high-dimensional latent space.

Optimization of the composite loss function $\mathcal{L} = \mathcal{L}_{pred} + \mathcal{L}_{eig} + \mathcal{L}_{ortho}$
is mandated to prevent overfitting and guarantee the dissipativity of the system matrix. Specifically:

\begin{enumerate}[1)]
    \item \textbf{Prediction Loss}:
    This term evaluates the dynamic evolution over a prediction horizon $N_p$:
    \begin{equation}
        \mathcal{L}_{pred} = \sum_{i=1}^{N_p} \gamma^i \left\| z_{k+i} - \left(A^i z_k + \sum_{j=0}^{i-1} A^{i-j-1} B u_{k+j}\right) \right\|_2^2,
    \end{equation}
    where $\gamma \in (0, 1]$ is a discount factor.

    \item \textbf{Eigenvalue Penalty}:
    To ensure that the system matrix $A$ preserves Schur stability, an eigenvalue penalty term is introduced to softly constrain its eigenvalues within the unit circle:
    \begin{equation}
         \mathcal{L}_{eig} = \alpha_{eig} \sum_{i} \max(0, |\lambda_i(A)| - \beta),
    \end{equation}
    where $\alpha_{eig} > 0$ represents the penalty weighting coefficient,
    and $\beta \in (0, 1)$ denotes the prescribed eigenvalue envelope boundary.

    \item \textbf{Orthogonal Regularization}:
    This term is introduced to suppress the transient growth induced by non-normality:
    \begin{equation}
        \mathcal{L}_{ortho} = \alpha_{ortho} \|AA^T - A^TA\|_F^2,
    \end{equation}
    where $\alpha_{ortho} > 0$ represents the regularization weighting coefficient, and $\|\cdot\|_F$ denotes the Frobenius norm.
\end{enumerate}

Furthermore, during the network initialization phase, the matrix $A$ is randomly generated and subjected to Singular Value Decomposition (SVD),
constructed as $A_{\text{init}} = U \Sigma_{\text{init}} V^T$.
The maximum singular value is truncated and scaled to be less than unity,
which guarantees that the initial spectral norm satisfies $\|A\|_2 < 1$,
thereby providing a Schur-stable starting point for the high-dimensional linear dynamics.
Subsequently, the eigenvalue penalty term $\mathcal{L}_{\text{eig}}$
encourages the matrix $A$ to preserve Schur stability during the nonlinear parameter optimization process.

Then, consider the MPC design for the original nonlinear system.
By leveraging the Koopman linear lifting framework \eqref{eq:Koopman},
the original NMPC problem is transformed into the latent space.
At control step $k$ with a prediction horizon $N_p$,
the nominal optimization problem $\mathcal{P}_N(\bar{z}_k)$ is formulated,
seeking the optimal control sequence $U^* = [u_{k|k}^{*T}, u_{k+1|k}^{*T}, \dots, u_{k+N_p-1|k}^{*T}]^T$ to minimize the quadratic cost:
\begin{subequations} \label{eq:mpc_constraints}
\begin{align}
    & \min_{U} \Bigg\{\sum_{i=0}^{N_p-1} \left( \| \bar{z}_{k+i|k} - z_{ref} \|_Q^2 + \| u_{k+i|k} \|_R^2 \right) + \| \bar{z}_{k+N_p|k} - z_{ref} \|_{P_f}^2 \Bigg\} \label{eq:mpc_cost} \\
    & \quad \text{s.t.} \nonumber \\
    & \qquad \bar{z}_{k+i+1|k} = A \bar{z}_{k+i|k} + B u_{k+i|k}, \quad i = 0, \dots, N_p-1 \label{eq:mpc_dyn} \\
    & \qquad \bar{z}_{k|k} = \bar{z}_k \\
    & \qquad x_{min} \le C \bar{z}_{k+i|k} \le x_{max}, \quad i = 1, \dots, N_p \\
    & \qquad \bar{z}_{k+N_p|k} \in \mathcal{Z}_f \\
    & \qquad u_{min} \le u_{k+i|k} \le u_{max}, \quad i = 0, \dots, N_p-1 \label{eq:Ucons}
\end{align}
\end{subequations}
where $Q \succ 0$ and $R \succ 0$ are symmetric positive definite weighting matrices;
$P_f \succ 0$ is the terminal penalty matrix derived from the discrete-time algebraic Riccati equation (DARE);
$\mathcal{Z}_f$ is the nominal terminal invariant set;
$\bar{z}_{k+i|k}$ is the nominal predicted state in the latent space;
and $z_{ref}$ is the reference state mapped into the latent space.

To simplify the algebraic derivation,
a  coordinate translation is assumed to map the target reference to the origin,
establishing $\phi(x_{ref}) = \mathbf{0}$.
Concurrently, the regularization training of the deep network ensures that its nonlinear feature mapping closely approximates
$\psi(\mathbf{0}) = \mathbf{0}$.

\subsection{Error Envelope Bounding in the Latent Space}
\label{subsec:gaussian_uncertainty}

Under stochastic measurement intermittency, the true state cannot be directly acquired.
The unmodeled dynamics, target maneuvers, or environmental fluctuations generate an unknown additive drift disturbance $w_k \in \mathbb{R}^{n_x}$.
Consequently, the evolution of the nonlinear system is governed by:
\begin{equation}
    x_{true, k+1} = F(x_{true, k}, u_k) + w_k
\end{equation}

In practical constrained nonlinear control systems,
since the instantaneous rate of state change is limited by the system dynamics,
the actual drift is confined within a compact operational envelope.
The disturbance $w_k$ is modeled as a truncated multivariate Gaussian distribution defined over a compact set,
whose probability density function $p(w_k)$ satisfies:
\begin{equation}
    p(w_k) = \begin{cases}
        \frac{1}{\eta} \mathcal{N}(w_k \mid \mathbf{0}, \Sigma_{w}), & \text{if } \|w_k\|_2 \le r_w \\
        0, & \text{otherwise}
    \end{cases}
\end{equation}
where $\|w_k\|_2 \le r_w$ represents the closed ball determined by the maximum single-step drift capability, and the constant
$\eta = \int_{\|\xi\|_2 \le r_w} \mathcal{N}(\xi \mid \mathbf{0}, \Sigma_{w}) d\xi$
is the partition function ensuring normalization over the integration domain.
Since this geometric truncation confines the drift within a bounded envelope,
it guarantees the global boundedness of the expected squared norm of the disturbance.

Accounting for the measurement unavailability,
the maximum prediction error bound in the worst case scenario at the $i$-th prediction step is denoted as $r_{x, i}$.
This boundary quantifies the cumulative error envelope in the state space,
arising from the combined effects of the unknown drift and the model mismatch.
Consequently, the low-dimensional total error of the true state $x_{true, k+i}$ deviating from the nominal predicted state $\bar{x}_{k+i|k} \triangleq C \bar{z}_{k+i|k}$ is bounded in the worst case by:
\begin{equation}\label{eq:rxi}
    \|x_{true, k+i} - \bar{x}_{k+i|k}\|_2 \le r_{x, i}
\end{equation}

Since the nominal predicted state $\bar{z}_{k+i|k}$ is recursively generated by the globally linearized model,
its connection to the nonlinear observation function and the corresponding approximation error bounds are governed by the following assumption:

\begin{assumption}\label{ass:structural_consistency}
For the nominal predicted state $\bar{z}$ within the latent space, the latent consistency residual $\Delta \varepsilon_{rec} \triangleq \text{Encoder}(\bar{x}) - \bar{z}$ is uniformly bounded.
Specifically, maintaining the identity mapping $\phi(\bar{x}) \equiv \bar{x}$,
the reconstruction decomposition satisfies
$\bar{z} = \text{Encoder}(\bar{x}) - \Delta \varepsilon_{rec} = [\bar{x}^T, \psi(\bar{x})^T]^T - [\mathbf{0}^T, \Delta \varepsilon_{\psi}^T]^T$,
where the residual obeys $\|\Delta \varepsilon_{rec}\|_2 \le \varepsilon_{rec}$ for a small positive constant $\varepsilon_{rec}$.
\end{assumption}

The uniform boundedness assumed in Assumption \ref{ass:structural_consistency} is practically well-posed.
As established in \cite{zhang2022robust},
for an observable mapping that is Lipschitz continuous,
its residual function evaluated over a compact domain possesses a finite upper bound.
Specifically, let the state operational space $\mathcal{X} \subset \mathbb{R}^{n_x}$ be a compact set.
Given that the nonlinear feature mapping $\psi(\cdot)$ is enforced to be globally Lipschitz continuous via spectral normalization,
the latent consistency residual mapping $\Delta \varepsilon_{rec}: \mathcal{X} \to \mathbb{R}^{n_z}$ is continuous.
Consequently, its image set $\Delta \mathcal{E} = \{\Delta \varepsilon_{rec}(x) \mid x \in \mathcal{X}\}$ is compact.
This guarantees the existence of a finite radius $\varepsilon_{rec} < \infty$ such that $\sup_{e \in \Delta \mathcal{E}} \Vert{}e\Vert{}_2 \le \varepsilon_{rec}$.

By leveraging Assumption~\ref{ass:structural_consistency} alongside \eqref{eq:rxi},
the low-dimensional cumulative tracking error is mapped into the high-dimensional latent space.
Based on this mechanism, the following lemma is proposed to determine the envelope for the latent prediction error.

\begin{lemma}
\label{lem:latent_error_bound}
Let the prediction error in the latent space be $e_{z, i} \triangleq z_{true, k+i} - \bar{z}_{k+i|k}$.
Then, $e_{z, i}$ is bounded by the compact error envelope set $\mathcal{E}_{z, i}$:
\begin{equation}
e_{z, i} \in \mathcal{E}_{z, i} \triangleq \Big\{ e_z \in \mathbb{R}^{n_z} \mid \|e_z\|_2 \le \sqrt{2} r_{x, i} + \varepsilon_{rec} \Big\}
\end{equation}
which implies the state inclusion $z_{true, k+i} \in \bar{z}_{k+i|k} \oplus \mathcal{E}_{z, i}$.
\end{lemma}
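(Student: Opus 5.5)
The plan is to split the latent error into an encoder-evaluated part and the consistency residual of Assumption~\ref{ass:structural_consistency}, and then use the Lipschitz structure of the encoder. Write $z_{true,k+i} = \text{Encoder}(x_{true,k+i}) = [x_{true,k+i}^T, \psi(x_{true,k+i})^T]^T$. With $\bar{x}_{k+i|k} = C\bar{z}_{k+i|k}$, Assumption~\ref{ass:structural_consistency} gives $\bar{z}_{k+i|k} = \text{Encoder}(\bar{x}_{k+i|k}) - \Delta\varepsilon_{rec}$. Hence
\begin{equation*}
e_{z,i} = \big(\text{Encoder}(x_{true,k+i}) - \text{Encoder}(\bar{x}_{k+i|k})\big) + \Delta\varepsilon_{rec}.
\end{equation*}
The triangle inequality then reduces the claim to bounding the first bracket by $\sqrt{2}\, r_{x,i}$, since $\|\Delta\varepsilon_{rec}\|_2 \le \varepsilon_{rec}$.

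For the bracket, I use the block structure of the encoder. Set $\delta \triangleq x_{true,k+i} - \bar{x}_{k+i|k}$. The top block is $\phi(x_{true,k+i}) - \phi(\bar{x}_{k+i|k}) = \delta$, because of the identity skip-connection. The bottom block is $\psi(x_{true,k+i}) - \psi(\bar{x}_{k+i|k})$, whose norm is at most $L_\psi\|\delta\|_2 \le \|\delta\|_2$ by spectral normalization. Squaring and summing the block norms gives
\begin{equation*}
\|\text{Encoder}(x_{true,k+i}) - \text{Encoder}(\bar{x}_{k+i|k})\|_2^2 \le (1+L_\psi^2)\|\delta\|_2^2 \le 2\|\delta\|_2^2.
\end{equation*}
Invoking \eqref{eq:rxi}, $\|\delta\|_2 \le r_{x,i}$, so the bracket is at most $\sqrt{2}\, r_{x,i}$. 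Combining the two pieces yields $\|e_{z,i}\|_2 \le \sqrt{2} r_{x,i} + \varepsilon_{rec}$.

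The set $\mathcal{E}_{z,i}$ is a closed Euclidean ball, so it is compact. The inclusion $z_{true,k+i} \in \bar{z}_{k+i|k} \oplus \mathcal{E}_{z,i}$ is just a restatement of $e_{z,i} \in \mathcal{E}_{z,i}$ via the definition of the Minkowski sum.

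The main obstacle is justifying that the true latent state is exactly $\text{Encoder}(x_{true,k+i})$ and that the Lipschitz bound on $\psi$ applies at both points. This requires $x_{true,k+i}$ and $\bar{x}_{k+i|k}$ to lie where the encoder and Assumption~\ref{ass:structural_consistency} are valid. I would handle this by taking $z_{true} \triangleq \text{Encoder}(x_{true})$ as the definition of the true latent state. Since $L_\psi \le 1$ is global, no domain restriction is needed for the $\psi$ estimate. The sign convention of $\Delta\varepsilon_{rec}$ is irrelevant, because only its norm enters the bound.
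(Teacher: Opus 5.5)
Your proposal is correct and follows essentially the same route as the paper. Both arguments write $e_{z,i}$ as the encoder difference plus the consistency residual $\Delta\varepsilon_{rec}$, and use the identity skip-connection for the $\phi$-block and $L_\psi \le 1$ for the $\psi$-block to bound the encoder difference by $\sqrt{2}\,\|x_{true,k+i}-\bar{x}_{k+i|k}\|_2$; then they apply the triangle inequality to add $\varepsilon_{rec}$ and invoke \eqref{eq:rxi}.
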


\begin{proof}
By definition of the nonlinear lifting mapping,
the true latent state is $z_{true, k+i} = [\phi(x_{true, k+i})^T, \psi(x_{true, k+i})^T]^T$.
Utilizing Assumption~\ref{ass:structural_consistency},
the nominal predicted state can be decomposed as
$\bar{z}_{k+i|k} = [\phi(\bar{x}_{k+i|k})^T, \psi(\bar{x}_{k+i|k})^T]^T - \Delta \varepsilon_{rec}$.
Consequently, the high-dimensional latent space error vector can be partitioned into the low-dimensional encoding error
$e_\phi = \phi(x_{true, k+i}) - \phi(\bar{x}_{k+i|k})$,
the nonlinear feature error $e_\psi = \psi(x_{true, k+i}) - \psi(\bar{x}_{k+i|k})$,
and the latent consistency residual $\Delta \varepsilon_{rec}$.

Let the true prediction error in the original state space be defined as $e_x = x_{true, k+i} - \bar{x}_{k+i|k}$.
Since the coordinate identity mapping and the spectral normalization constraint ensure $L_\phi = 1$ and $L_\psi \le 1$, respectively,
the Lipschitz continuity of the encoder implies that $\|e_\phi\|_2 = \|e_x\|_2$ and $\|e_\psi\|_2 \le \|e_x\|_2$.
By the orthogonal decomposition property of the high-dimensional vector $L_2$ norm and the triangle inequality, it follows that:
$\|e_{z, i}\|_2 \le \sqrt{\|e_\phi\|_2^2 + \|e_\psi\|_2^2} + \|\Delta \varepsilon_{rec}\|_2
\le \sqrt{1^2 + 1^2}\, \|e_x\|_2 + \varepsilon_{rec}
= \sqrt{2}\, \|e_x\|_2 + \varepsilon_{rec}.$

Substituting the known state space error bound $\|e_x\|_2 \le r_{x, i}$ yields $\|e_{z, i}\|_2 \le \sqrt{2} r_{x, i} + \varepsilon_{rec}$.
\end{proof}

Furthermore, to characterize the disturbances and nominal state resets in the subsequent Markov jump framework,
it is necessary to define the upper bounds of the single-step additive perturbations in the high-dimensional latent space.
For the open-loop dynamic drift during measurement unavailability,
let $r_w$ be the single-step drift upper bound, i.e., $\|w_k\|_2 \le r_w$.
The compact sets formed here represent safety boundaries containing all possible realizations, rather than probabilistic confidence intervals.

\begin{lemma}
\label{lem:single_step_bounds}
Under Assumption~\ref{ass:structural_consistency},
the single-step perturbations mapped into the high-dimensional latent space satisfy the following bounds:
\begin{enumerate}[1)]
    \item Given a single-step upper bound $\varepsilon_{model} \ge 0$ for the dynamic mismatch residual \cite{han2026mako},
    the additive dynamic disturbance $d_k \triangleq z_{true, k+1} - (A z_{true, k} + B u_k)$ is bounded by the compact disturbance set $\mathcal{D}$:
    \begin{equation}
        d_k \in \mathcal{D} \triangleq \Big\{d \in \mathbb{R}^{n_z} \mid \|d\|_2 \le \sqrt{2} r_w + \varepsilon_{model} \Big\}
    \end{equation}

    \item The observation is corrupted by a bounded sensor noise $v_{sensor, k}$
    satisfying $\|v_{sensor, k}\|_2 \le r_{sensor}$,
    such that $x_{meas, k} = x_{true, k} + v_{sensor, k}$.
    The state reset noise $v_k \triangleq z_{true, k} - \text{Encoder}(x_{meas, k})$ is bounded by the compact noise set $\mathcal{E}_0$:
    \begin{equation}
        v_k \in \mathcal{E}_0 \triangleq \Big\{v \in \mathbb{R}^{n_z} \mid \|v\|_2 \le \sqrt{2} r_{sensor} \Big\}
    \end{equation}
\end{enumerate}
\end{lemma}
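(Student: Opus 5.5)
For item 1), I will mirror the proof of Lemma~\ref{lem:latent_error_bound}. The idea is to split the latent disturbance into a part caused by the state drift $w_k$ and a part caused by model mismatch. Write $z_{true,k+1} = \text{Encoder}(F(x_{true,k},u_k) + w_k)$ and introduce the drift-free successor $\tilde z_{k+1} \triangleq \text{Encoder}(F(x_{true,k},u_k))$. Then
\begin{equation*}
d_k = \big(z_{true,k+1} - \tilde z_{k+1}\big) + \big(\tilde z_{k+1} - A z_{true,k} - B u_k\big).
\end{equation*}
The second bracket is the single-step Koopman dynamic mismatch residual. By the cited hypothesis it has norm at most $\varepsilon_{model}$, and this is the interpretation of $\varepsilon_{model}$ I will fix explicitly.

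For the first bracket I will use the block structure of the encoder. The $\phi$-block difference equals exactly $w_k$, because $\phi$ is the identity. The $\psi$-block difference has norm at most $L_\psi\|w_k\|_2 \le \|w_k\|_2$, by spectral normalization. The Euclidean norm of the stacked vector is therefore at most $\sqrt{\|w_k\|_2^2 + \|w_k\|_2^2} = \sqrt{2}\,\|w_k\|_2 \le \sqrt{2}\, r_w$, using the truncated support of $w_k$. The triangle inequality then gives $\|d_k\|_2 \le \sqrt{2} r_w + \varepsilon_{model}$. Since this holds for every realization, $d_k \in \mathcal{D}$ surely, not just with some probability.

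For item 2), the same block argument applies directly. Here $z_{true,k} = \text{Encoder}(x_{true,k})$ and $\text{Encoder}(x_{meas,k}) = \text{Encoder}(x_{true,k} + v_{sensor,k})$. The $\phi$-block of $v_k$ is $-v_{sensor,k}$. The $\psi$-block is bounded in norm by $\|v_{sensor,k}\|_2$. Hence $\|v_k\|_2 \le \sqrt{2}\, r_{sensor}$, and no residual term appears because the reset uses the encoder evaluated exactly. Assumption~\ref{ass:structural_consistency} is not needed for either bound, since both involve true encodings rather than nominal predicted states. It only explains why $\varepsilon_{rec}$ does not enter these bounds.

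The main obstacle is the precise meaning of $\varepsilon_{model}$. If it were defined as a mismatch that already includes the disturbed trajectory, the split above would double count. I will therefore state that $\varepsilon_{model}$ bounds $\|\text{Encoder}(F(x,u)) - A\,\text{Encoder}(x) - Bu\|_2$ uniformly over $\mathcal{X}\times\mathcal{U}$, as is standard in the cited work. I will also note that this uniform bound requires the trajectory to remain in the compact set $\mathcal{X}$, which is where it is justified.
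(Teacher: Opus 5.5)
Your proposal is correct and is essentially the argument the paper has in mind. The paper omits the proof, saying only that it follows from the triangle inequality and Lipschitz continuity as in Lemma~\ref{lem:latent_error_bound}; that is exactly your block-wise bound ($\phi$ is the identity, $\psi$ is 1-Lipschitz) together with splitting $d_k$ into a drift part and a model-mismatch part. Your reading of $\varepsilon_{model}$ as $\sup_{x\in\mathcal{X},u\in\mathcal{U}}\|\text{Encoder}(F(x,u)) - A\,\text{Encoder}(x) - Bu\|_2$ matches the definition the paper gives immediately after the lemma, and you are right that Assumption~\ref{ass:structural_consistency} is never actually used.
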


\begin{proof}
The derivations of these boundaries rely on the triangle inequality and Lipschitz continuity,
analogous to Lemma~\ref{lem:latent_error_bound}, and are omitted here for brevity.
\end{proof}

The existence of the finite upper bound $\varepsilon_{model}$ introduced in Lemma \ref{lem:single_step_bounds} is well-posed.
Analogous to the residual bounding in~\cite{zhang2022robust},
the dynamic mismatch $d_k$ can be reformulated as the difference between the true nonlinear lifting transition and the nominal linear Koopman prediction.
Specifically, since both the state operational space $\mathcal{X} \subset \mathbb{R}^{n_x}$ and the control space $\mathcal{U} \subset \mathbb{R}^{n_u}$ constitute compact sets,
and the constrained Deep Koopman encoder preserves Lipschitz continuity,
this difference mapping is continuous over a compact domain $\mathcal{X} \times \mathcal{U}$.
Therefore, the residual norm is bounded by a finite constant $\varepsilon_{model} \triangleq \sup_{x \in \mathcal{X}, u \in \mathcal{U}} \| \text{Encoder}(F(x, u)) - A \text{Encoder}(x) - B u \|_2$.

Since the sets $\mathcal{D}$ and $\mathcal{E}_0$ are both closed balls with finite radii,
the random disturbance $d_k$ and reset noise $v_k$ are bounded in magnitude.
Consequently, their expected squared norms possess finite upper bounds;
that is, there exist finite positive scalar constants $M_d < \infty$ and $M_v < \infty$
such that the stochastic tracking error system satisfies $\mathbb{E}[\|d_k\|^2] \le M_d$ and $\mathbb{E}[\|v_k\|^2] \le M_v$.

\subsection{Stochastic Switching Modeling for Intermittent Measurements}
\label{subsec:markov_jump_modeling}

To analyze the measurement discontinuity,
the sequence of measurement modes is modeled as a discrete-time Markov chain with a mode variable $\theta_k \in \mathcal{S} = \{0, 1\}$.
Here, $\theta_k = 0$ indicates the measurement is available but corrupted by noise, while $\theta_k = 1$ indicates the measurement unavailability.
The state transition probability matrix between different measurement availability modes is defined as
$\mathbb{P} = [p_{ij}] \in \mathbb{R}^{2 \times 2}$, where $p_{ij} \triangleq \operatorname{Pr}(\theta_{k+1} = j \mid \theta_k = i)$,
satisfying the row normalization condition $p_{i0} + p_{i1} = 1$.

Under the \textbf{measurement-triggered reset} mechanism,
true error states cannot be acquired during measurement dropouts,
and the evolution of the nominal state relies on the availability of reset.
In this mechanism, the evolution trajectory of the latent prediction error $e_{k+1}$ is dictated by whether a valid measurement is obtained at step $k+1$.
The specific mode-dependent state update and error propagation logic is detailed as follows:
\begin{enumerate}[1)]
    \item \textbf{If $\theta_{k+1} = 0$ :}
    Valid state measurement is restored at step $k+1$.
    The system utilizes the nonlinear network encoder to reset the current nominal predicted state,
    which is executed as $\bar{z}_{k+1|k+1} = \text{Encoder}(x_{meas, k+1})$.
    Consequently, the accumulated prediction error during the measurement dropout period is instantaneously cleared,
    leaving only the reset noise such that $e_{k+1} = v_{k+1}$.
    The compact envelope accounts for both the sensor noise and the encoder's latent consistency residual.
    \item \textbf{If $\theta_{k+1} = 1$ :}
    In the absence of valid measurement feedback,
    the controller continues to utilize the open-loop nominal predicted state.
    The prediction error between the true and nominal states propagates in the latent space according to: $e_{k+1} = A e_k + d_k$.
\end{enumerate}

By introducing a system matrix modulated by the next step mode $\theta_{k+1}$,
this complex behavior encompassing both dynamic evolution and resets is unified into the following nonhomogeneous Markov jump error dynamics:
\begin{equation}\label{eq:mjls_error_dynamics}
    e_{k+1} = A(\theta_{k+1}) e_k + w(\theta_{k+1}, k)
\end{equation}
where the system matrix $A(\theta_{k+1})$ and the equivalent additive disturbance term $w(\theta_{k+1}, k)$ depending on the system availability mode are defined as:
\begin{subequations}
\begin{align}
    A(\theta_{k+1}) &= \begin{cases}
        \mathbf{0}, & \text{if } \theta_{k+1} = 0 \\
        A, & \text{if } \theta_{k+1} = 1
    \end{cases}, \\
    w(\theta_{k+1}, k) &= \begin{cases}
        v_{k+1}, & \text{if } \theta_{k+1} = 0 \\
        d_k, & \text{if } \theta_{k+1} = 1
    \end{cases}\label{eq:equivalent_disturbance}
\end{align}
\end{subequations}
As established in Lemma~\ref{lem:single_step_bounds},
both the state reset noise $v_{k+1} \in \mathcal{E}_0$ and the single-step dynamic disturbance $d_k \in \mathcal{D}$
reside within compact sets with finite radii in the high-dimensional latent space.
Consequently, the expected squared norm of this jump disturbance possesses a deterministic global finite upper bound under any system mode,
facilitating the subsequent analysis of mean-square ultimate boundedness for the overall error system.

\section{Main Results}
\label{sec:main_results}

\subsection{Open-Loop Error Envelope Expansion}
\label{subsec:open_loop_expansion}

Suppose the system experiences a measurement dropout at time $t_c$, triggering an open-loop evolution for $k \in [t_c, t_c + T_d - 1]$.
The nominal state evolves as $\bar{z}_{k+1} = A \bar{z}_k + B u_k$,
while the true state incorporates the additive dynamic disturbance $d_k \in \mathcal{D}$ as $z_{true, k+1} = A z_{true, k} + B u_k + d_k$.
Define the accumulated open-loop tracking error as $e_{track,k} \triangleq z_{true, k} - \bar{z}_k$.
It is assumed that at the onset of the measurement unavailability sequence,
the initial tracking error is confined to the compact set $\mathcal{E}_0$ due to historical observation noise,
satisfying $e_{track,t_c} \in \mathcal{E}_0$.
Subtracting the nominal dynamics from the true dynamics yields the linear recursive equation for the prediction error during continuous measurement dropout:
\begin{equation}
    e_{track,k+1} = A e_{track,k} + d_k
\end{equation}

After $l$ continuous dropout steps ($1 \le l \le T_d$),
the high-dimensional error envelope $\mathcal{E}_l \subset \mathbb{R}^{n_z}$ and its low-dimensional projection $\mathcal{R}_l \subset \mathbb{R}^{n_x}$
via the observation matrix $C$ are derived as:
\begin{equation}
    \mathcal{R}_l \triangleq C \mathcal{E}_l = C A^l \mathcal{E}_0 \oplus \bigoplus_{j=0}^{l-1} C A^j \mathcal{D}
\end{equation}

Let $e_{x} \in \mathcal{R}_l$ denote an arbitrary projection error vector.
To dynamically tighten the constraint boundaries,
the maximum projection radii of the error set $\mathcal{R}_l$ along each coordinate axis are computed as:
\begin{equation}
    \Delta \mathcal{X}_l^{(j)} \triangleq \max_{e_x \in \mathcal{R}_l} |c_j^T e_x|, \quad j = 1, \dots, n_x
\end{equation}
where $c_j^T$ denotes the $j$-th row of $I_{n_x}$.
Adopting the independent coordinate contraction radii $\Delta \mathcal{X}_l^{(j)}$
corresponds to bounding the projected error set $\mathcal{R}_l$ with its $L_\infty$-norm outer hypercube envelope.

\begin{remark}
\label{rem:conservative_approximation}
This $L_\infty$-norm relaxation constitutes a conservative inner approximation of the exact nominal safe constraint set $\mathcal{X}_{safe} \ominus \mathcal{R}_l$.
The primary objective is to convert computationally prohibitive non-polyhedral set difference operations into standard axis-aligned linear constraints,
preserving theoretical safety while satisfying the real-time efficiency requirements ~\cite{arcari2023stochastic,tan2023distributionally,villanueva2024configurationconstrained}.
\end{remark}

To guarantee safety within the operational boundaries
$\mathcal{X}_{safe} \triangleq \{ x \in \mathbb{R}^{n_x} \mid x_{\min}^{(j)} \le c_j^T x \le x_{\max}^{(j)}, \ j=1,\dots,n_x \},$
the nominal predicted state must satisfy the tightened constraints:
\begin{equation}
    x_{\min}^{(j)} + \Delta \mathcal{X}_l^{(j)} \le c_j^T C \bar{z}_{t_c+l|t_c} \le x_{\max}^{(j)} - \Delta \mathcal{X}_l^{(j)}, \quad j = 1, \dots, n_x
\end{equation}

As the measurement dropout duration increases,
this continuous constraint tightening restricts the admissible operating domain,
forcing the nominal state trajectory to violate the constraints, thereby compromising the recursive feasibility.

\subsection{Mean-Square Ultimate Boundedness of the Prediction Error}
\label{subsec:msub_markov_jumps}

To evaluate the global dissipativity of the Markov jump error dynamics,
a stochastic quadratic Lyapunov function,
parameterized by the current measurement availability mode $\theta_k \in \mathcal{S} = \{0, 1\}$,
is constructed as follows:
\begin{equation}
    V(e_k, \theta_k) = e_k^T P(\theta_k) e_k
\end{equation}
where the weighting matrices are required to be positive definite for all possible modes, satisfying $P(i) \succ 0$ for $i \in \{0, 1\}$.

According to the Rayleigh quotient property of quadratic forms,
this Lyapunov function is bounded in the state space by the squared $L_2$ norm of the error state.
Defining the global minimum and maximum eigenvalue constants as
$c_1 \triangleq \min_{i \in \{0, 1\}} \lambda_{\min}(P(i)) > 0$ and $c_2 \triangleq \max_{i \in \{0, 1\}} \lambda_{\max}(P(i)) > 0$,
the following envelope relation holds for any time step $k$ and mode $\theta_k$:
\begin{equation}\label{eq:eqLyabound}
    c_1 \|e_k\|_2^2 \le V(e_k, \theta_k) \le c_2 \|e_k\|_2^2
\end{equation}

Next, the single-step conditional expectation difference of the Lyapunov function along the true system trajectory is evaluated:
\begin{equation}
    \mathbb{E}[ \Delta V \mid e_k, \theta_k=i ] = \mathbb{E}[ V_{k+1} \mid e_k, \theta_{k} =i ] - V(e_k, i)
\end{equation}

By applying the law of total probability and substituting the reconstructed error dynamics ~\eqref{eq:mjls_error_dynamics},
the expectation term $\mathbb{E}[ V_{k+1} \mid e_k, \theta_{k} = i ]$
is expanded across the two possible modes at time $k+1$:
an instantaneous measurement-triggered reset when $\theta_{k+1}=0$,
and open-loop propagation when $\theta_{k+1}=1$.
\begin{equation}
    \mathbb{E}[ V_{k+1} \mid e_k, \theta_{k} = i ] = p_{i0} \mathbb{E}[ e_{k+1}^T P(0) e_{k+1} \mid \theta_{k+1} = 0 ] + p_{i1} \mathbb{E}[ e_{k+1}^T P(1) e_{k+1} \mid e_k, \theta_{k+1} = 1 ]
\end{equation}

In this expansion, when $\theta_{k+1}=0$,
the system undergoes a reset,
reducing the new error to the reset noise,
yielding $e_{k+1} = v_{k+1}$.
In this scenario, the prediction error is decoupled from the current state $e_k$.
Conversely, when $\theta_{k+1}=1$, the system propagates the open-loop nominal dynamics,
and the error evolves as $e_{k+1} = A e_k + d_k$. Substituting these relations yields:
\begin{equation}\label{eq:deltaLyapunov}
\mathbb{E}[ V_{k+1} \mid e_k, \theta_{k} = i ] = p_{i0} \mathbb{E}[ v_{k+1}^T P(0) v_{k+1} ] + p_{i1} \mathbb{E}[ (A e_k + d_k)^T P(1) (A e_k + d_k) \mid e_k ]
\end{equation}

Expanding the quadratic form in the second term generates the cross term $2 (A e_k)^T P(1) d_k$.
Given that the equivalent dynamic disturbance $d_k$,
arising from nonlinear unmodeled dynamics and continuous dynamic mismatches,
exhibits strong state dependency,
statistical independence between $d_k$ and $e_k$ cannot be assumed.
Therefore, by applying Young's inequality under a weighted inner product,
this cross term can be upper-bounded for any given positive real constant $\zeta > 0$ as follows:
\begin{equation}\label{eq:Youngs}
    2 (A e_k)^T P(1) d_k \le \zeta e_k^T A^T P(1) A e_k + \frac{1}{\zeta} d_k^T P(1) d_k
\end{equation}

Bounding the cross term in the expanded conditional expectation \eqref{eq:deltaLyapunov}
using the right side of \eqref{eq:Youngs} decouples the quadratic terms containing $e_k$ from the disturbance expectation terms.
Accordingly, the mode-dependent equivalent expected disturbance bound $M_w(i)$ is defined as:
\begin{equation}
    M_w(i) \triangleq p_{i0} \mathbb{E}[ v_{k+1}^T P(0) v_{k+1} ] + p_{i1} \left(1 + \frac{1}{\zeta}\right) \mathbb{E}[ d_k^T P(1) d_k ]
\end{equation}

As established in Lemma \ref{lem:single_step_bounds},
both the state reset noise $v_{k+1} \in \mathcal{E}_0$
and the single-step dynamic disturbance $d_k \in \mathcal{D}$
are confined to compact sets with finite radii.
Consequently, their expected squared norms possess upper bounds under any mode:
\begin{subequations}
\begin{align}
    \mathbb{E}[ v_{k+1}^T P(0) v_{k+1} ] &\le \lambda_{\max}(P(0)) \sup_{v \in \mathcal{E}_0} \|v\|_2^2 < \infty \\
    \mathbb{E}[ d_k^T P(1) d_k ] &\le \lambda_{\max}(P(1)) \sup_{d \in \mathcal{D}} \|d\|_2^2 < \infty
\end{align}
\end{subequations}

Hence, there definitively exists a global finite scalar constant $M_w > 0$,
independent of the current state $e_k$ and observation mode $i$,
such that the envelope $M_w(i) \le M_w$ holds for all $i \in \{0,1\}$.
In summary, the single-step conditional expectation difference inequality of the Lyapunov function can be bounded and simplified to:
\begin{equation}\label{eq:ExpecteddifferenceInequality}
\mathbb{E}[ \Delta V \mid e_k, \theta_k=i ] \le e_k^T \left( p_{i1}(1+\zeta) A^T P(1) A \right) e_k - e_k^T P(i) e_k + M_w
\end{equation}

To ensure that the jump system maintains energy dissipativity under continuous random disturbances, the quadratic matrix within the parentheses must be negative definite.
This yields the coupled Linear Matrix Inequalities (LMIs) that govern the mean-square stability of the measurement-triggered system:
\begin{equation}
    p_{i1}(1+\zeta) A^T P(1) A - P(i) \prec 0, \quad \forall i \in \{0, 1\}
\end{equation}

\begin{definition}
\label{def:msub}
For the stochastic Markov jump error dynamics established in \eqref{eq:mjls_error_dynamics},
the high-dimensional prediction error sequence is said to achieve global \textbf{Mean-Square Ultimate Boundedness (MSUB)} if,
for any initial error state $e_0 \in \mathbb{R}^{n_z}$ and any initial measurement availability mode $\theta_0 \in \mathcal{S}$,
there exists a global finite positive constant $E_{\infty} > 0$ such that the expected squared norm of the error state satisfies:
\begin{equation}
    \limsup_{k \to \infty} \mathbb{E} [ \|e_k\|_2^2 ] \le E_{\infty}^2
\end{equation}
\end{definition}

\begin{theorem}
\label{thm:msub}
Suppose the nominal latent space system matrix $A$ is Schur stable,
and the transition probability satisfies $p_{11} < 1$.
Then, the high-dimensional prediction error sequence $e_k$ is global MSUB, i.e., the following condition holds:
\begin{equation}
    \limsup_{k \to \infty} \mathbb{E} [ \|e_k\|_2^2 ] \le \frac{c_2 M_w}{c_1 \alpha} \triangleq E_{\infty}^2
\end{equation}
where $\alpha > 0$ is the global dissipation rate,
and $E_{\infty}^2$ is the steady-state mean-square limit.
\end{theorem}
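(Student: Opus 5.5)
The proof follows the standard stochastic Lyapunov route, built on the expected-difference inequality \eqref{eq:ExpecteddifferenceInequality}. The first and main step is to show that the hypotheses guarantee feasibility of the coupled LMIs $p_{i1}(1+\zeta)A^TP(1)A - P(i) \prec 0$. This is the only place where Schur stability of $A$ and $p_{11}<1$ enter.

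I would construct $P(1)$ explicitly. Since $A$ is Schur, there is $P(1)\succ 0$ with $A^TP(1)A \preceq \rho P(1)$ for some $\rho<1$; one can take the solution of a scaled Lyapunov equation, or $P(1)=\sum_{j\ge 0}\rho^{-j}(A^T)^jA^j$ with $\rho(A)^2<\rho<1$.

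The $i=1$ inequality then reads $p_{11}(1+\zeta)\rho P(1) \prec P(1)$. Because $p_{11}\rho<1$, it holds once $\zeta>0$ is chosen small enough that $(1+\zeta)p_{11}\rho<1$.

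For $i=0$, the inequality constrains only $P(0)$. I would take $P(0) = p_{01}(1+\zeta)A^TP(1)A + \epsilon I$ with $\epsilon>0$, which is positive definite and makes the $i=0$ LMI hold trivially.

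The main obstacle is the dependence of $M_w$ on $\zeta$ and on $P(\cdot)$. I would handle it by fixing $\zeta$ and $P$ first and only then defining $c_1$, $c_2$ and $M_w$, all of which are finite by Lemma~\ref{lem:single_step_bounds}.

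With the LMIs strict and the mode set finite, there is $\alpha\in(0,1)$ such that
\[
P(i)-p_{i1}(1+\zeta)A^TP(1)A \succeq \alpha P(i), \quad i\in\{0,1\}.
\]
For instance, take $\alpha$ as the minimum over $i$ of the smallest generalized eigenvalue of this pencil. Substituting into \eqref{eq:ExpecteddifferenceInequality} gives $\mathbb{E}[V_{k+1}\mid e_k,\theta_k]\le (1-\alpha)V(e_k,\theta_k)+M_w$.

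Taking total expectations and iterating yields
\[
\mathbb{E}[V_k]\le (1-\alpha)^k\,\mathbb{E}[V_0] + M_w\sum_{j=0}^{k-1}(1-\alpha)^j \le (1-\alpha)^k c_2\|e_0\|_2^2 + \frac{M_w}{\alpha}.
\]
Here I should note that the tower property is legitimate, since the bound on $M_w$ is uniform in $e_k$ and $i$.

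Finally, by \eqref{eq:eqLyabound}, $\mathbb{E}\|e_k\|_2^2\le \mathbb{E}[V_k]/c_1$. Letting $k\to\infty$ gives $\limsup_k\mathbb{E}\|e_k\|_2^2\le M_w/(c_1\alpha)$, which is at most $c_2M_w/(c_1\alpha)$ provided $c_2\ge 1$. That condition can be arranged by scaling $P$, since $M_w$ scales linearly with $P$ while $c_2/c_1$ is scale-invariant. This holds for every initial error $e_0$ and initial mode $\theta_0$, giving global MSUB with the stated $E_\infty^2$.

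The step I would double-check is that the factor $c_2$ in the stated bound is consistent. If the authors' convention puts $\alpha$ relative to $\|e\|^2$, i.e. $-\alpha\|e_k\|^2 \ge -(\alpha/c_2)V$, the factor $c_2$ arises naturally. I would therefore choose $\alpha$ through $P(i)-p_{i1}(1+\zeta)A^TP(1)A\succeq \alpha I$ and use $\alpha\|e\|^2\ge(\alpha/c_2)V$, which reproduces exactly $c_2M_w/(c_1\alpha)$.
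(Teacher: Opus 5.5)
Your proposal is correct and follows essentially the same route as the paper. Both arguments obtain feasibility of the coupled LMIs by choosing $\zeta$ small enough that $\sqrt{p_{11}(1+\zeta)}A$ is contractive in a suitable $P(1)\succ 0$, set $P(0)=p_{01}(1+\zeta)A^TP(1)A+\mu I$ and $\alpha=\min_i\lambda_{\min}\big(P(i)-p_{i1}(1+\zeta)A^TP(1)A\big)$, derive the drift $\mathbb{E}[V_{k+1}]\le(1-\alpha/c_2)\mathbb{E}[V_k]+M_w$, and convert back to $\mathbb{E}[\|e_k\|_2^2]$ via $c_1$; your scaling detour to force $c_2\ge 1$ is unnecessary, because $P(i)-p_{i1}(1+\zeta)A^TP(1)A\succeq\alpha I\succeq(\alpha/c_2)P(i)$ gives your generalized-eigenvalue rate $\alpha'\ge\alpha/c_2$, so your sharper bound $M_w/(c_1\alpha')$ already implies the stated $c_2M_w/(c_1\alpha)$.
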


\begin{proof}

The nominal matrix $A$ has been endowed with Schur stability during the network training phase.
To guarantee the desired expected dissipativity under this framework,
it is necessary to prove that the coupled inequality group $p_{i1}(1+\zeta) A^T P(1) A - P(i) \prec 0$
admits positive definite matrix solutions $P(i) \succ 0$ for both $i \in \{0,1\}$.

First, consider the case when the system is in the measurement dropout mode ($i=1$). The stability condition reduces to:
\begin{equation}
    p_{11}(1+\zeta) A^T P(1) A - P(1) \prec 0
\end{equation}

The existence of a positive definite solution $P(1) \succ 0$ for this LMI is guaranteed if and only if the scaled matrix $\sqrt{p_{11}(1+\zeta)}A$ is Schur stable.
This stability requirement is equivalent to ensuring that its spectral radius is strictly within the unit circle, yielding:
\begin{equation}\label{eq:relaxed_zeta_condition}
    p_{11}(1+\zeta)\rho(A)^2 < 1
\end{equation}

Given the transition probability condition $p_{11} < 1$ and that the dissipativity-constrained Deep Koopman network ensures a stable nominal spectral radius $\rho(A) < 1$,
the upper bound $1/\rho(A)^2$ is greater than unity.
Consequently, it is always feasible to select a positive scalar $\zeta > 0$ such that the contraction criterion \eqref{eq:relaxed_zeta_condition} is fully satisfied.
Under this valid parameter assignment, the discrete-time algebraic Lyapunov equation $P(1) - p_{11}(1+\zeta) A^T P(1) A = Q$ admits a unique positive definite matrix solution $P(1) \succ 0$ for any given symmetric matrix $Q \succ 0$, thereby proving the feasibility of the LMI for $i=1$.

Next, consider the case when the system is in the measurement available mode ($i=0$). The stability condition becomes:
\begin{equation}
    p_{01}(1+\zeta) A^T P(1) A - P(0) \prec 0
\end{equation}

From the preceding step,
the positive definite matrix $P(1) \succ 0$ is fully determined.
Consequently, the term $S \triangleq p_{01}(1+\zeta) A^T P(1) A$ constitutes a positive semi-definite bounded matrix.
Thus, setting $P(0) = S + \mu I$,
where $\mu > 0$ is an arbitrary positive scalar,
unconditionally guarantees $P(0) \succ 0$ and satisfies the inequality.

Given the feasibility of the coupled LMIs,
there exists a global positive scalar decay rate $\alpha$, defined by the minimum eigenvalue across all modes:
\begin{equation}\label{eq:EigBound}
    \alpha \triangleq \min_{i \in \{0, 1\}} \lambda_{\min}\big(P(i) - p_{i1}(1+\zeta) A^T P(1) A\big) > 0
\end{equation}

Applying this eigenvalue bound \eqref{eq:EigBound} to the expected difference inequality derived earlier,
the single-step conditional expectation of the Lyapunov function satisfies the following dissipation condition:
\begin{equation}
    \mathbb{E} [ \Delta V \mid e_k, \theta_k=i ] \le -\alpha \|e_k\|_2^2 + M_w
\end{equation}
Taking the unconditional expectation on both sides over all possible modes and states yields:
\begin{equation}
    \mathbb{E}[V_{k+1}] - \mathbb{E}[V_k] \le -\alpha \mathbb{E}[\|e_k\|_2^2] + M_w
\end{equation}

Incorporating the equivalent Lyapunov bounds \eqref{eq:eqLyabound}, expressed as $-\|e_k\|_2^2 \le -\frac{1}{c_2} V_k$, yields the geometric drift condition:
\begin{equation}
    \mathbb{E}[V_{k+1}] \le \left(1 - \frac{\alpha}{c_2}\right) \mathbb{E}[V_k] + M_w
\end{equation}

Since the global dissipation rate $\alpha > 0$ and the maximum eigenvalue constant $c_2 > 0$,
the geometric drift contraction condition $1 - \frac{\alpha}{c_2} < 1$ is satisfied.
As the time step $k \to \infty$, the global expectation of the Lyapunov function is ultimately bounded by a constant limit:
\begin{equation}
    \limsup_{k \to \infty} \mathbb{E} [ V_k ] \le \frac{M_w}{\alpha / c_2} = \frac{c_2 M_w}{\alpha}
\end{equation}

Re-applying the lower-bound inequality $\mathbb{E}[\|e_k\|_2^2] \le \frac{1}{c_1} \mathbb{E}[V_k]$
establishes the ultimate steady-state upper bound for the expected high-dimensional error norm:
\begin{equation}
    \limsup_{k \to \infty} \mathbb{E} [ \|e_k\|_2^2 ] \le \frac{c_2 M_w}{c_1 \alpha} \triangleq E_{\infty}^2
\end{equation}
\end{proof}

Theorem \ref{thm:msub} establishes the global convergence of the system under non-homogeneous Markov jump disturbances.
This implies that although the open-loop error envelope experiences severe local inflation during a single prolonged sequence of measurement dropout,
from a macroscopic statistical time domain perspective,
provided the objective system permits intermittent measurement updates satisfying $1 - p_{11} > 0$,
the mean-square value of the prediction error is prevented from unbounded divergence.

\begin{remark}
\label{rem:limit_behavior_zeta}
Distinct from the conventional conservative formulation,
the contraction criterion \eqref{eq:relaxed_zeta_condition} ensures that a strictly positive upper bound for $\zeta$ is always well-defined even if $p_{11} \to 1$.
Nevertheless, the transition probability $p_{11}$ cannot be arbitrarily close to unity due to instantaneous state constraint considerations.
In practice, the maximum allowable open-loop prediction time $l_{max} \triangleq \sup \left\{ l \in \mathbb{Z}_{> 0} \ \middle\vert\ \mathcal{X}_{safe} \ominus \mathcal{R}_l \neq \emptyset \right\}$ limits the valid upper bound of the transition probability via $p_{11} \le 1 - \frac{1}{l_{max}}$.
Consequently, the finite ultimate boundary $E_{\infty}^2 < \infty$ remains valid exclusively within the system's feasible operational regime, defined by a compact subset where the transition probability $p_{11}$ is bounded away from 1.
\end{remark}

\subsection{Probabilistically Truncated Soft Constraints and Recursive Feasibility}
\label{subsec:soft_constraints_feasibility}

Although the high-dimensional prediction error achieves global MSUB with its steady-state mean-square limit bounded by
$E_{\infty}^2 \triangleq c_2 M_w / (c_1 \alpha)$,
the convex QP solver cannot directly process high-dimensional expectations.
To synthesize executable control laws,
these statistical bounds must be mapped into algebraic constraints on the $n_x$ dimension state space.

Define the prediction error in the state space as $e_{x, k} = C e_k$.
Given that the spectral norm of the observation matrix $C = [I_{n_x}, \mathbf{0}]$ satisfies $\|C\|_2 = 1$,
the expected squared norm of the state space error is upper-bounded by the high-dimensional mean-square limit:
\begin{equation}
\limsup_{k \to \infty} \mathbb{E} [ \|e_{x, k}\|_2^2 ] \le \limsup_{k \to \infty} \mathbb{E} [ \|C\|_2^2 \|e_k\|_2^2 ] \le E_{\infty}^2
\end{equation}

Extending this asymptotic property to ensure safety at any transient finite time $k$ requires an analysis of the system's finite-time behavior.
As assumed in Section \ref{subsec:open_loop_expansion},
since the initial prediction error resides within the bounded compact set $\mathcal{E}_0$,
the expected squared error at the initial time remains bounded.
Coupled with the single-step conditional expected dissipation property established in Section \ref{subsec:msub_markov_jumps},
the sequence of expected squared errors $\mathbb{E}[\|e_k\|_2^2]$ will not experience finite-time escape.
Consequently, there definitively exists a global absolute constant bound
$\bar{E}_{x}^2 \triangleq \max\left( \sup_{k \ge 0} \mathbb{E}[\|e_{x, k}\|_2^2], E_{\infty}^2 \right) < \infty$,
which envelops both the maximum initial transient and the steady state limit,
such that for any evolution step $k \in \mathbb{Z}_{\ge 0}$, the following holds:
\begin{equation}
    \mathbb{E}[\|e_{x, k}\|_2^2] \le \bar{E}_{x}^2
\end{equation}

Since the squared $L_2$ norm $\|e_{x, k}\|_2^2$ is a non-negative random variable,
applying Markov's Inequality dictates that for any prescribed positive spatial boundary $R_{prob} > 0$,
the marginal probability of a constraint violation is bounded by:
\begin{equation}
    \operatorname{Pr}(\|e_{x, k}\|_2 \ge R_{prob}) =
    \operatorname{Pr}(\|e_{x, k}\|_2^2 \ge R_{prob}^2)
    \le \frac{\mathbb{E}[\|e_{x, k}\|_2^2]}{R_{prob}^2} 
    \le \frac{\bar{E}_{x}^2}{R_{prob}^2}.
\end{equation}

To guarantee instantaneous close-loop safety at a prescribed high confidence level $p \in (0, 1)$,
the upper bound of the violation probability is equated to $1-p$.
This formulation yields a distribution-free limit safety radius that is independent of
any specific non-Gaussian or mixed probability distribution morphology:
\begin{equation}\label{eq:Rprob}
    R_{prob} = \frac{\bar{E}_{x}}{\sqrt{1 - p}}
\end{equation}

To resolve the loss of recursive feasibility induced by continuous measurement dropouts as revealed in Section \ref{subsec:open_loop_expansion},
this steady state probabilistic boundary is utilized to truncate the deterministic dynamic contraction margins under open-loop evolution.
Let $l_k$ denote the accumulated number of continuous measurement dropout steps up to the current control step $k$.
Over the prediction horizon $i = 1, \dots, N_p$,
the truncated dynamic contraction margins for each dimension are reconstructed as:
\begin{equation}\label{eq:truncation_margins}
    \Delta \mathcal{X}_i^{*(j)} \triangleq \min(\Delta \mathcal{X}_{l_k+i}^{(j)}, R_{prob}), \quad j = 1, \dots, n_x
\end{equation}

\begin{remark}
\label{rem:probabilistic_truncation}
The truncation mechanism formulated in \eqref{eq:truncation_margins} enforces two relaxations to guarantee computational tractability.
Deploying $R_{prob}$ as an independent axis-wise safety margin inherits the $L_\infty$-norm conservative inner approximation detailed
in Remark \ref{rem:conservative_approximation}.
Probabilistically, the minimum operation enforces point-wise bounds rather than joint chance constraints over the prediction horizon.
While sacrificing exact multi-step joint probabilistic guarantees,
this decoupled approach converts complex stochastic envelopes into standard affine constraints,
confining the marginal violation risk at any individual dimension and prediction step to $1-p$.
\end{remark}

Furthermore, non-negative slack variable vectors
$\epsilon_{k+i|k} = [\epsilon_i^{(1)}, \dots, \epsilon_i^{(n_x)}]^T \ge \mathbf{0}$
are introduced to relax the hard state constraints via an exact penalty formulation:
\begin{equation}
x_{\min}^{(j)} + \Delta \mathcal{X}_i^{*(j)} - \epsilon_i^{(j)} \le c_j^T C \bar{z}_{k+i|k} \le x_{\max}^{(j)} - \Delta \mathcal{X}_i^{*(j)} + \epsilon_i^{(j)}, \qquad j = 1, \dots, n_x
\end{equation}

Building upon this, the reformulated probabilistically truncated soft-constrained optimization problem,
denoted as $\mathcal{P}_N^{soft}(\bar{z}_k)$, is formulated as:
\begin{subequations}\label{eq:softOPT}
\begin{align}
    & \min_{U, \epsilon, \epsilon_{init}} \quad
    J_{soft} = J_{nominal} + \left( \lambda_{init}^T \epsilon_{init} + \epsilon_{init}^T S_{init} \epsilon_{init} \right) + \sum_{i=1}^{N_p-1} \left( \lambda^T \epsilon_{k+i|k} + \epsilon_{k+i|k}^T S \epsilon_{k+i|k} \right) \label{eq:softOPT_cost} \\
    & \quad \text{s.t.} \nonumber \\
    & \qquad \bar{z}_{k+i+1|k} = A \bar{z}_{k+i|k} + B u_{k+i|k}, ~ i = 0, \dots, N_p-1 \label{eq:softOPT_dyn} \\
    & \qquad -\epsilon_{init} \le \bar{z}_{k|k} - \bar{z}_k \le \epsilon_{init} \label{eq:softOPT_init} \\
    & \qquad x_{\min}^{(j)} + \Delta \mathcal{X}_i^{*(j)} - \epsilon_i^{(j)} \le c_j^T C \bar{z}_{k+i|k} \le x_{\max}^{(j)} - \Delta \mathcal{X}_i^{*(j)} + \epsilon_i^{(j)}, \quad j = 1, \dots, n_x, \; ~ i = 1, \dots, N_p-1 \label{eq:softOPT_state} \\
    & \qquad \bar{z}_{k+N_p|k} \in \mathcal{Z}_f \label{eq:softOPT_term} \\
    & \qquad u_{\min} \le u_{k+i|k} \le u_{\max}, ~ i = 0, \dots, N_p-1 \label{eq:softOPT_ctrl} \\
    & \qquad \epsilon_{init} \ge \mathbf{0}, \quad \epsilon_{k+i|k} \ge \mathbf{0}, ~ i = 1, \dots, N_p-1 \label{eq:softOPT_slack}
\end{align}
\end{subequations}
where $J_{nominal} \triangleq \sum_{i=0}^{N_p-1} \left( \| \bar{z}_{k+i|k} - z_{ref} \|_Q^2 + \| u_{k+i|k} \|_R^2 \right) + \| \bar{z}_{k+N_p|k} - z_{ref} \|_{P_f}^2$;
the parameter $\epsilon_{init} \in \mathbb{R}^{n_z}$ is the initial state relaxation slack variable;
the weight vectors $\lambda \gg \mathbf{0}$ and $\lambda_{init} \gg \mathbf{0}$ are sufficiently large constants designed to guarantee the exactness of the penalty function \cite{luque2025model,Kerrigan2000SoftCA};
and the diagonal matrices $S, S_{init} \succeq 0$ ensure the convexity of the QP solver.

Since the optimizer utilizes axis-aligned boundaries following the aforementioned $L_\infty$ norm conversion logic,
the limit probabilistic truncation envelope set under the $L_\infty$ norm is defined as
$\mathcal{B}_{prob}^{\infty} \triangleq \{ x \in \mathbb{R}^2 \mid \|x\|_{\infty} \le R_{prob} \}$.
To ensure problem solvability, the tightened constraint under the worst-case scenario is defined as
\begin{equation}\label{eq:Xtight}
    \mathcal{X}_{tight} \triangleq \mathcal{X}_{safe} \ominus \mathcal{B}_{prob}^{\infty},
\end{equation}
where $\mathcal{X}_{tight}$ must constitute a non-empty compact set.

Furthermore, the stability of the proposed predictive controller relies on the offline construction of the terminal ingredients.
The local feedback gain $K_f$ is derived by solving the unconstrained discrete-time LQR problem utilizing the nominal system matrices $(A, B)$.
Based on the corresponding DARE solution $P_f \succ 0$,
the nominal terminal set $\mathcal{Z}_f$ is constructed as a Lyapunov level set ensuring single-step cost decay.
To theoretically guarantee closed-loop safety and recursive feasibility,
$\mathcal{Z}_f$ is required to satisfy the following \textbf{terminal compatibility conditions}:
\begin{enumerate}[1)]
    \item \textbf{State Compatibility}:
    The projection of the latent terminal invariant set $\mathcal{Z}_f$ onto the original state space
    must be contained within the tightened safe constraint set, i.e.,
    $C \mathcal{Z}_f \subseteq \mathcal{X}_{tight}$.
    \item \textbf{Control Compatibility}:
    Under the local feedback law $u = K_f z$,
    the terminal set $\mathcal{Z}_f$ must satisfy the following conditions for any $z \in \mathcal{Z}_f$:
    1) \emph{Positive Invariance}: $(A + B K_f) \mathcal{Z}_f \subseteq \mathcal{Z}_f$;
    2) \emph{Control Admissibility}: The generated feedback control inputs must honor the amplitude and increment limits,
    formulated as $u_{\min} \le K_f z \le u_{\max}$.
\end{enumerate}

\begin{theorem}
\label{thm:recursive_feasibility}
For the optimization problem \eqref{eq:softOPT}, suppose the aforementioned terminal compatibility conditions are satisfied.
If an initially feasible solution exists at the initial time,
then under the stochastic measurement availability sequence governed by the defined Markov chain,
the following properties hold:
\begin{enumerate}[1)]
    \item \textbf{Recursive Feasibility}:
    The optimization problem \eqref{eq:softOPT} remains recursively feasible for all evolution steps $k \in \mathbb{Z}_{\ge 0}$,
    regardless of the measurement availability mode.
    \item \textbf{Expected Dissipation}:
    The optimal value function $V_N^0(\bar{z}_k)$ of the nominal dynamics acts as a stochastic Lyapunov function,
    satisfying the following expected dissipation inequality with a bounded drift term $M_V < \infty$:
    \begin{equation}
        \mathbb{E}[V_N^0(\bar{z}_{k+1}) \mid \bar{z}_k] - V_N^0(\bar{z}_k) \le -\lambda_{\min}(Q) \|\bar{z}_k - z_{ref}\|_2^2 + M_V
    \end{equation}
    \item \textbf{Closed-loop Tracking Error MSUB and Probabilistic Safety}:
    The true closed-loop tracking error $\tilde{z}_{true, k} \triangleq z_{true, k} - z_{ref}$ achieves global MSUB.
    The instantaneous marginal probability of the true state deviating from the nominal prediction
    beyond the probabilistically truncated safety radius is bounded by the prescribed confidence level $p$:
    \begin{equation}
        \operatorname{Pr}(\| C z_{true, k} - C \bar{z}_{k} \|_2 \ge R_{prob}) \le 1 - p
    \end{equation}
\end{enumerate}
\end{theorem}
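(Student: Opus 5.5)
The plan is to handle the three claims in sequence, with the first two resting on a shifted-candidate construction and the third on a two-term error decomposition.

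\textbf{Recursive feasibility.} I would argue feasibility is structural and does not rely on closeness of optimal solutions. The slack variables make the state constraints \eqref{eq:softOPT_state} and the initial condition \eqref{eq:softOPT_init} always satisfiable. At step $k+1$ I would take the optimal sequence from step $k$, shift it by one, and append the terminal law $u=K_f\bar z_{k+N_p|k}$. The initial state of the candidate is set to the shifted nominal state $\bar z_{k+1|k}$. In mode $\theta_{k+1}=1$ this equals $\bar z_{k+1}$, so $\epsilon_{init}=0$. In mode $\theta_{k+1}=0$ the reset value $\bar z_{k+1}=\mathrm{Encoder}(x_{meas,k+1})$ may differ, and I would absorb the jump by choosing $\epsilon_{init}=|\bar z_{k+1|k}-\bar z_{k+1}|$ componentwise. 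Because the candidate trajectory equals the previous one, the input constraints hold on the first $N_p-1$ steps. The appended input is admissible by control compatibility, and the terminal constraint holds by positive invariance of $\mathcal Z_f$. For the new last interior state, which lies in $\mathcal Z_f$, state compatibility $C\mathcal Z_f\subseteq\mathcal X_{tight}$ gives zero slack. This works because \eqref{eq:truncation_margins} caps every margin at $R_{prob}$, so $\mathcal X_{tight}$ is contained in each truncated tightened set. On the remaining interior steps, the margins may have changed because $l_k$ was reset or incremented, and I would enlarge the slacks as needed, which is always possible. Hence the candidate is feasible regardless of $\theta_{k+1}$.

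\textbf{Expected dissipation.} Next I would evaluate $J_{soft}$ at this candidate. The standard DARE terminal argument gives $J_{nominal}$ a decrease of at least $\|\bar z_{k|k}-z_{ref}\|_Q^2\ge\lambda_{\min}(Q)\|\bar z_k-z_{ref}\|_2^2$, up to the $\epsilon_{init}$ correction at step $k$. The added costs come from two sources:
\begin{itemize}
\item the reset slack, whose size is bounded by $\|\bar z_{k+1|k}-z_{true,k+1}\|+\|v_{k+1}\|$;
\item the extra state slacks, which are bounded by the finite differences $\Delta\mathcal X^{*(j)}_i\le R_{prob}$.
\end{itemize}
Both are linear-plus-quadratic in quantities whose second moments are uniformly bounded. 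The first is controlled by $\bar E_x$ through Theorem~\ref{thm:msub} and the compact sets of Lemma~\ref{lem:single_step_bounds}, and the second by $R_{prob}$. Taking the conditional expectation over $\theta_{k+1}$ and the noise, and using optimality $V_N^0(\bar z_{k+1})\le J_{soft}(\text{candidate})$, then yields the drift term $M_V<\infty$.

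\textbf{Closed-loop MSUB and probabilistic safety.} I would telescope the dissipation inequality in expectation to obtain $\limsup\mathbb E\|\bar z_k-z_{ref}\|^2\le M_V/\lambda_{\min}(Q)$. I would then write $\tilde z_{true,k}=(\bar z_k-z_{ref})+e_k$ and use $\|a+b\|^2\le2\|a\|^2+2\|b\|^2$ together with Theorem~\ref{thm:msub} to conclude MSUB of the true tracking error. The probabilistic bound follows directly from the Markov-inequality argument leading to \eqref{eq:Rprob}, since $Cz_{true,k}-C\bar z_k=e_{x,k}$ and $\mathbb E\|e_{x,k}\|^2\le\bar E_x^2$.

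The main obstacle I expect is the dissipation step. It requires showing that the reset-induced slack penalty, which is of order $\lambda_{init}^T|e_{k+1}|$ plus a quadratic term, has a \emph{uniformly} bounded conditional expectation. This is delicate because the error is coupled to $\bar z_k$ through the state-dependent disturbance $d_k$. I would first bound it almost surely using the compact radii of $\mathcal D$ and $\mathcal E_0$ together with a bounded dropout length, and only fall back to the second-moment bound $\bar E_x^2$ if the almost-sure bound fails.
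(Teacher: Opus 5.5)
There is a genuine gap in part 3. Telescoping the expected dissipation inequality from $k=0$ to $K-1$ and using $V_N^0\ge 0$ only gives
\begin{equation*}
\frac{1}{K}\sum_{k=0}^{K-1}\mathbb{E}\big[\|\bar z_k-z_{ref}\|_2^2\big]\le\frac{V_N^0(\bar z_0)}{K\lambda_{\min}(Q)}+\frac{M_V}{\lambda_{\min}(Q)}.
\end{equation*}
This bounds the Ces\`aro mean, not $\limsup_{k}\mathbb{E}[\|\bar z_k-z_{ref}\|_2^2]$, which is what MSUB in the sense of Definition~\ref{def:msub} requires. A drift of the form $-\lambda_{\min}(Q)\|\tilde z_k\|_2^2+M_V$ does not by itself stop $\mathbb{E}[V_N^0(\bar z_k)]$ from growing and producing ever rarer but ever larger excursions of $\mathbb{E}[\|\tilde z_k\|_2^2]$.

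The missing ingredient is the quadratic sandwich $c_{v_1}\|\tilde z\|_2^2\le V_N^0(\bar z)\le c_{v_2}\|\tilde z\|_2^2$. The upper bound turns the drift into the contraction $\mathbb{E}[V_{k+1}]\le(1-\lambda_{\min}(Q)/c_{v_2})\,\mathbb{E}[V_k]+M_V$, so $\limsup_k\mathbb{E}[V_k]\le c_{v_2}M_V/\lambda_{\min}(Q)$. The lower bound then yields $\limsup_k\mathbb{E}[\|\tilde z_k\|_2^2]\le c_{v_2}M_V/(c_{v_1}\lambda_{\min}(Q))$. This is the paper's argument, and the absent factor $c_{v_2}/c_{v_1}$ in your constant is the symptom.

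In your setting both bounds are easy to supply (with $z_{ref}=\mathbf{0}$ after the coordinate translation). The lower bound follows from $Q\succ0$, $S_{init}\succ0$ and $|\bar z_{k|k}-\bar z_k|\le\epsilon_{init}$. For the upper bound, let $r>0$ be such that the ball of radius $r$ lies in $\mathcal Z_f$. When $\|\bar z\|_2\le r$, the LQR candidate costs at most $\|\bar z\|_{P_f}^2$. Otherwise, the trivial candidate $\bar z_{k|k}=\mathbf{0}$, $u\equiv\mathbf{0}$ costs $\lambda_{init}^T|\bar z|+|\bar z|^TS_{init}|\bar z|\le(\|\lambda_{init}\|_2/r+\lambda_{\max}(S_{init}))\|\bar z\|_2^2$. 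With this repair, your $2\|a\|_2^2+2\|b\|_2^2$ split and the Markov-inequality step go through.

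Parts 1 and 2 are otherwise sound. Your feasibility argument matches the paper's, with the welcome extra check that truncation at $R_{prob}$ makes the last interior state slack-free.

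In part 2 you take a different route. You charge the reset jump directly to the candidate's $\epsilon_{init}$ penalty. The paper instead anchors the candidate at $\hat z_{k+1}$ and pays for the jump through an asserted local Lipschitz constant $L_V$ of $V_N^0$ together with $\mathbb{E}[\|\delta_k\|_2]\le M_\delta$. Your route avoids that unproved parametric-QP continuity claim. The price is that the quadratic $S_{init}$ term makes you need second rather than first moments of the jump.

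On the obstacle you flag: a bounded dropout length is not available, since with $p_{11}>0$ dropouts of every length occur with positive probability. It is also unnecessary. Because $A$ is Schur and $d_k$, $v_k$ lie in fixed balls, every mode sequence gives
\begin{equation*}
\|e_k\|_2\le\sup_{l\ge0}\|A^l\|_2\max\big(\|e_0\|_2,\sqrt{2}\,r_{sensor}\big)+\big(\sqrt{2}\,r_w+\varepsilon_{model}\big)\sum_{j\ge0}\|A^j\|_2<\infty
\end{equation*}
almost surely. Hence the reset slack $|Ae_k+d_k-v_{k+1}|$ is bounded almost surely, and its conditional expectation given $\bar z_k$ is uniformly bounded. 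The paper's $E_{max}$ plays this role; like the paper, this tacitly takes the time-$k$ initial slack to be zero, which is the correction you already flagged.

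Your fallback to $\bar E_x^2$ would not have worked, for two reasons. An unconditional moment bound does not control a conditional expectation given $\bar z_k$. And $\bar E_x$ only bounds the projection $Ce_k$, not the full latent error that enters $\epsilon_{init}$.
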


\begin{proof}
\noindent \textbf{1) Recursive Feasibility:}
Assume that at the current time $k$,
an optimal nominal control sequence $U_k^* = [u_{k|k}^*, \dots, u_{k+N_p-1|k}^*]$ exists for the optimization problem.
At time $k+1$, the standard shifted candidate control sequence is constructed as follows:
\begin{equation}
    \tilde{U} = [u_{k+1|k}^*, \dots, u_{k+N_p-1|k}^*, K_f \bar{z}_{k+N_p|k}]
\end{equation}

To establish recursive feasibility,
it must be demonstrated that the shifted candidate sequence $\tilde{U}$ satisfies all constraints of the reformulated optimization problem
$\mathcal{P}_N^{soft}(\bar{z}_{k+1})$, which primarily comprise the hard control amplitude limits and the probabilistically truncated soft state constraints.
For the prediction steps $i \in [1, N_p-1]$, the candidate inputs $u_{k+i|k}^*$ inherit feasibility from the optimal sequence obtained at step $k$.
At the terminal step $i = N_p$,
the defined Control Compatibility condition ensures that the nominal terminal set $\mathcal{Z}_f$ is control-admissible,
guaranteeing that the local feedback $K_f \bar{z}_{k+N_p|k}$ honors the control amplitude boundaries.
Consequently, the candidate sequence $\tilde{U}$ satisfies the hard control constraints.

The nominal state update must be analyzed across two exclusive measurement conditions.
Denoting the open-loop state $\hat{z}_{k+1} \triangleq A \bar{z}_k + B u_{k|k}^*$,
the realized update is defined as $\bar{z}_{k+1} = \hat{z}_{k+1} + \delta_k$,
where $\delta_k$ represents the jump term.
According to the Markov jump error dynamics governed by \eqref{eq:mjls_error_dynamics}--\eqref{eq:equivalent_disturbance},
the jump term $\delta_k$ expands as:
\begin{subequations}
\begin{align}
    \delta_k &= \begin{cases}
        \mathbf{0}, & \text{if } \theta_{k+1} = 1 \\
        z_{true, k+1} - v_{k+1} - (A \bar{z}_k + B u_k^*), & \text{if } \theta_{k+1} = 0
    \end{cases} \label{eq:delta_first} \\
    &= \begin{cases}
        \mathbf{0}, & \text{if } \theta_{k+1} = 1 \\
        d_k - v_{k+1} + A e_k, & \text{if } \theta_{k+1} = 0
    \end{cases} \label{eq:delta_second}
\end{align}
\end{subequations}
Crucially, the recursive feasibility must be guaranteed under two distinct modes:

\textit{Under open-loop propagation} ($\theta_{k+1} = 1$): The nominal state evolves as dynamically predicted without jumps.
The shifted candidate sequence $\tilde{U}$ drives the terminal state into the invariant set $\mathcal{Z}_f$.
Meanwhile, over the prediction horizon $i \in [1, N_p-1]$,
$\Delta \mathcal{X}_i^{*(j)}$ may expand driven by accumulated measurement unavailability.
In this mode, $\epsilon_{k+i|k}$ are only required to compensate for
the continuous expansion of the dynamic contraction margins $\Delta \mathcal{X}^*_i$.

\textit{Under measurement-triggered resets} ($\theta_{k+1} = 0$):
The state reset introduces the jump term $\delta_{k}$ to the initial nominal state $\bar{z}_{k+1}$.
Instead of shifting the entire candidate trajectory,
the exact penalty framework accommodates this jump via $\epsilon_{init}$.
By allocating $\epsilon_{init}$ to absorb the reset jump,
the optimizer anchors the initial prediction state at $\hat{z}_{k+1}$.
Consequently, the original shifted candidate sequence $\tilde{U}$ propels the system into
the invariant terminal set $\mathcal{Z}_f$.

Consequently, regardless of whether the system experiences open-loop propagation or stochastic resets,
there always exists a bounded non-negative slack sequence that renders $\tilde{U}$ a globally feasible solution,
thereby preserving recursive feasibility.

\noindent \textbf{2) Expected Dissipation:}
The predicted nominal state sequence is confined within a compact reachable set,
formulated as $\bar{z}_{k+i|k} \in \bar{\mathcal{Z}}_{reach} \subset \mathbb{R}^{n_z}$ for all $i \in \{1, \dots, N_p\}$.
According to Lemma \ref{lem:single_step_bounds},
the true state tracking error is prevented from unbounded finite-time escape.
For any time step $k$, there definitively exists an upper bound $E_{max} < \infty$
such that the error satisfies $\|z_{true, k} - \bar{z}_{k|k}\|_2 \le E_{max}$.
Confined within the compact reachable set $\bar{\mathcal{Z}}_{reach}$,
the maximum nominal state constraint violation induced by the candidate sequence defined as
the margin by which the nominal predicted states exceed the tightened boundaries $\mathcal{X}_{tight}$,
possesses a finite upper bound.
Let the nominal state deviation at the current time be defined as $\tilde{z}_k \triangleq \bar{z}_k - z_{ref}$,
and let the optimal value function be denoted by $V_N^0(\bar{z}_k)$.
Therefore, there exists a finite constant $\Delta V_{soft}^{max} < \infty$ triggered by the worst-case activation of the soft-constraint slack variables.
Let $J_N(\hat{z}_{k+1}, \tilde{U})$ denote the objective function cost corresponding to the shifted candidate sequence $\tilde{U}$
when applied to $\hat{z}_{k+1}$.
Since the terminal control law $K_f$ guarantees a quadratic reduction in the nominal terminal cost,
the resulting cost descent property of the candidate sequence is formulated as:
\begin{equation}
    J_N(\hat{z}_{k+1}, \tilde{U}) \le V_N^0(\bar{z}_k) - \lambda_{\min}(Q) \|\tilde{z}_k\|_2^2 + \Delta V_{soft}^{max}
\end{equation}

Given that the optimal value function is defined as the minimum over all feasible sequences,
it satisfies $V_N^0(\hat{z}_{k+1}) \le J_N(\hat{z}_{k+1}, \tilde{U})$.
This establishes the descent bound for the optimal cost:
\begin{equation}
    V_N^0(\hat{z}_{k+1}) \le V_N^0(\bar{z}_k) - \lambda_{\min}(Q) \|\tilde{z}_k\|_2^2 + \Delta V_{soft}^{max}
\end{equation}

A Convex QP featuring positive definite quadratic forms and an $L_1$ exact penalty exhibits continuity over the parametric reachable set.
Hence, the value function $V_N^0(\cdot)$ is locally Lipschitz continuous, with a Lipschitz constant denoted as $L_V > 0$.
The triangle inequality yields:
\begin{equation}
    V_N^0(\bar{z}_{k+1}) \le V_N^0(\hat{z}_{k+1}) + L_V \|\delta_k\|_2
\end{equation}

Since Theorem \ref{thm:msub} proved that $e_k$ achieves MSUB,
and $d_k \in \mathcal{D}, v_{k+1} \in \mathcal{E}_0$
are bounded disturbances within compact sets,
the expectation of the nominal state jump is bounded,
establishing $\mathbb{E}[\|\delta_k\|_2] \le M_\delta$ for a finite constant $M_\delta < \infty$.
By taking the conditional expectation on both sides and combining the constant terms,
the steady-state expected jump constant is defined as $M_V \triangleq \Delta V_{soft}^{max} + L_V M_\delta < \infty$.
This formulation yields the expected descent inequality under random resets:
\begin{equation}\label{eq:expected_descent_inequality}
    \mathbb{E}[V_N^0(\bar{z}_{k+1}) \mid \bar{z}_k] - V_N^0(\bar{z}_k) \le -\lambda_{\min}(Q) \|\tilde{z}_k\|_2^2 + M_V
\end{equation}

\begin{figure*}\rmfamily
  \centering
  \begin{subfigure}[t]{0.48\linewidth}
    \centering
    \includegraphics[width=\linewidth]{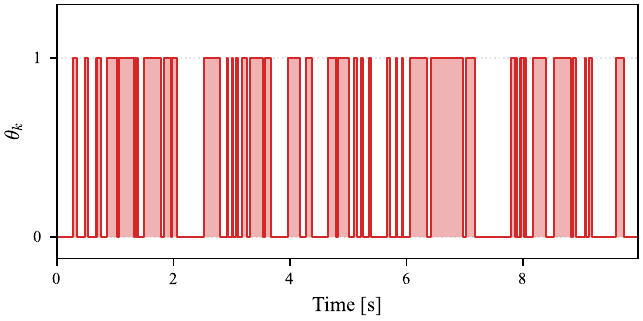}
    \caption{}
    \label{fig:subd}
  \end{subfigure}
  \hfill
  \begin{subfigure}[t]{0.48\linewidth}
    \centering
    \includegraphics[width=\linewidth]{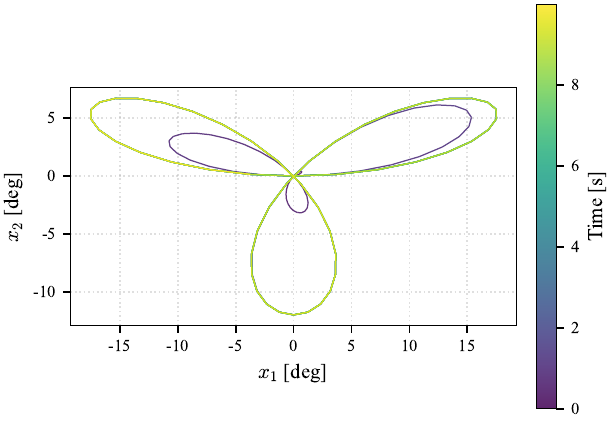}
    \caption{}
    \label{fig:subc}
  \end{subfigure}
  \caption{Numerical validation setup: (a) markov jump sequence of measurement availability; (b) reference trajectory in the image plane.}
  \label{fig:composite1}
\end{figure*}

\noindent \textbf{3) Closed-loop Tracking Error MSUB and Probabilistic Safety:}
The optimal value function $V_N^0(\bar{z}_k)$ is bounded by positive definite quadratic functions,
satisfying $c_{v_{1}} \|\tilde{z}_k\|_2^2 \le V_N^0(\bar{z}_k) \le c_{v_{2}} \|\tilde{z}_k\|_2^2$ for some constants $c_{v_{1}},c_{v_{2}}>0$.
Taking the unconditional expectation on both sides of ~\eqref{eq:expected_descent_inequality}
as $k \to \infty$, the mean-square value of $\tilde{z}_k$ is confined within an ultimate upper bound:
\begin{equation}
    \limsup_{k \to \infty} \mathbb{E} [ \Vert{}\tilde{z}_k\Vert{}_2^2 ] \le \frac{c_{v_{2}} M_V}{c_{v_{1}} \lambda_{\min}(Q)} < \infty
\end{equation}

According to the principle of linear superposition,
the true closed-loop tracking error can be decomposed as $z_{true, k} - z_{ref} = \tilde{z}_k + e_{z, k}$.
Since $\tilde{z}_k$ achieves mean-square boundedness,
and the global MSUB property of $e_{z, k}$ has been established in Theorem \ref{thm:msub},
$\tilde{z}_{true, k}$ achieves global MSUB.

Furthermore, regarding the probabilistic safety,
the deviation of the true state from the updated nominal prediction in the original state space
is the low-dimensional projection of the prediction error.
As derived in Section \ref{subsec:soft_constraints_feasibility},
the uniformly bounded expected squared error $\mathbb{E}[\|C z_{true, k} - C \bar{z}_k\|_2^2] \le \bar{E}_{x}^2$ guarantees that
the marginal violation probability against the truncated safety radius satisfies $\operatorname{Pr}(\| C z_{true, k} - C \bar{z}_k \|_2 \ge R_{prob}) \le 1 - p$.
This completes the proof.
\end{proof}

\section{Numerical Validation}
\label{sec:simulation}

To verify the obtained theoretical results, a visual servoing tracking task for a 2-DOF flexible-joint gimbal driven by series elastic actuators (SEAs) is adopted in this section.
While the foundational SEA dynamic structure is inspired by the model in \cite{zhao2024novel},
the physical parameters are specifically adapted to reflect a realistic gimbal.
Accordingly, the nonlinear dynamics of the system are formulated as follows:
\begin{equation}
\begin{aligned}
    J_p \ddot{q}_1 + c_p \dot{q}_1 + k_p q_1 + C_1(q_1, q_2, \dot{q}_1, \dot{q}_2) &= \tau_1, \\
    J_t \ddot{q}_2 + c_t \dot{q}_2 + k_t q_2 + C_2(q_1, q_2, \dot{q}_1) + G_{2}(q_2) &= \tau_2,
\end{aligned}
\end{equation}

where $q_1$ and $q_2$ represent the angular positions of the pan and tilt axes, respectively,
while $\tau_1$ and $\tau_2$ denote the control torques applied to respective axes.
The parameters $J_p = 0.05 \text{ kg} \cdot \text{m}^2$ and $J_t = 0.02 \text{ kg} \cdot \text{m}^2$ denote the moments of inertia for the pan and tilt axes, respectively.
The terms $k_p = 3.0 \text{ Nm/rad}$ and $k_t = 1.5 \text{ Nm/rad}$ represent the elastic coefficients,
while $c_p = 0.5 \text{ Nm} \cdot \text{s/rad}$ and $c_t = 0.3 \text{ Nm} \cdot \text{s/rad}$ account for the viscous damping coefficients.
Furthermore, the term $C_1(\cdot)$ accounts for the Coriolis coupling effects,
$C_2(\cdot)$ encapsulates the centrifugal coupling term,
and $G_{2}(q_2)$ models the gravitational unbalance torque.

\begin{figure*}
  \centering
  \begin{subfigure}[t]{0.48\linewidth}
    \centering
    \includegraphics[width=\linewidth]{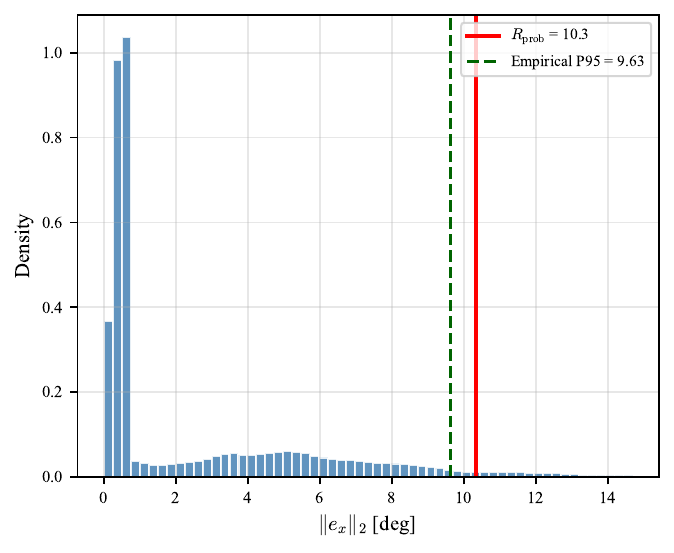}
    \caption{}
    \label{fig:error_distribution}
  \end{subfigure}
  \hfill
  \begin{subfigure}[t]{0.48\linewidth}
    \centering
    \includegraphics[width=\linewidth]{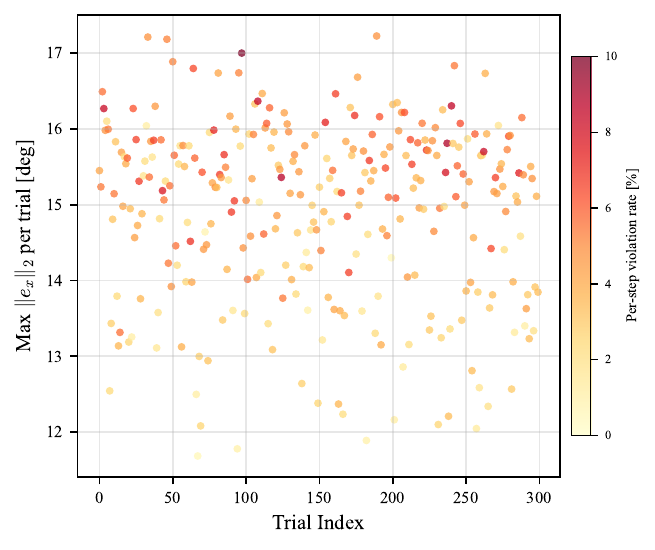}
    \caption{}
    \label{fig:max_error}
  \end{subfigure}
  \caption{Statistical validation of the probabilistic safety boundary via Monte Carlo simulations:
  (a) density distribution of the prediction error;
  (b) maximum prediction error and per-step violation rate across $300$ trials.}
  \label{fig:MTKL}
\end{figure*}

Concurrently, to replicate the stochastic measurement dropouts frequently encountered in practical visual tracking systems,
the measurement availability mode $\theta_k$ is modeled as a homogeneous discrete-time Markov chain,
a sample sequence of which is displayed in Figure~\ref{fig:subd}.
The transition probabilities are configured as $p_{01} = 0.15$ and $p_{11} = 0.8$.

Furthermore, Figure~\ref{fig:subc} illustrates the predefined reference trajectory within the $x_1$-$x_2$ image feature plane, which is parameterized as a three-petal rose curve:
$
x_{1,\text{ref}}(t) = 20^\circ\sin(3\pi t) \cos(\pi t),
x_{2,\text{ref}}(t) = 12^\circ \sin(3\pi t) \sin(\pi t).
$
Constrained by the FOV of the onboard camera,
the safe operational state space of the system is defined as a compact boundary:
$\mathcal{X}_{\text{safe}} = \{ x \in \mathbb{R}^4 \mid {} |x_1| \le 38^\circ,\ |x_2| \le 30^\circ, |x_3| \le 261.8\,^\circ/\mathrm{s},\ |x_4| \le 143.5\,^\circ/\mathrm{s} \}$.
The dual-axis control torques are restricted by $|u| \le 3.0 \text{ Nm}$.
The prediction horizon of the MPC controller is chosen as $N_p = 10$.
The quadratic cost weighting matrices for the system states and control inputs are configured as $Q = \operatorname{diag}(100, 200, 10, 10)$
and $R = 0.5 I_{2\times2}$, respectively,
where a minor regularization $q_\psi = 10^{-3}$ is applied to the latent states.
To implement the exact soft-penalty mechanism, the quadratic penalty matrix for the slack variables is set to $S = \operatorname{diag}(1, 1, 0.1, 0.1)$,
and a sufficiently large exact penalty weight vector is configured as $\lambda = [500, 500, 1000, 1000]^T$.
Furthermore, the initial exact penalty weight vector is specified as $\lambda_{init} = [\lambda^T, 500 \cdot \mathbf{1}_{n_\psi}^T]^T \in \mathbb{R}^{n_z}$,
and the corresponding initial quadratic penalty matrix is formulated as $S_{init} = \operatorname{diag}(S, I_{n_\psi}) \in \mathbb{R}^{n_z \times n_z}$.
The comprehensive structural and training configurations of the deep Koopman operator network are detailed in Table~\ref{tab:koopman_network_parameters}.

To quantify the probabilistic safety boundary, assuming that the Deep Koopman operator perfectly captures the nonlinear dynamics,
the equivalent expected disturbance bound is determined as $M_w = 0.000426$ under the dynamic drift $r_w = 0.002$ and the sensor noise $r_{sensor} = 0.012$.
Consequently, the steady-state mean-square error limit defined in Theorem~\ref{thm:msub} is computed as $E_{\infty}^2 = 0.001626 \text{ rad}^2$,
derived from the Lyapunov condition bounds $c_1 = 1.01$ and $c_2 = 3.84$, and the global dissipation rate $\alpha = 1.00$.
Moreover, under a confidence level of $p = 0.95$,
the distribution-free probabilistic safety radius~\eqref{eq:Rprob} is determined as $R_{\text{prob}} = 10.33^\circ$,
which satisfies the prerequisite that the tightened safe space $\mathcal{X}_{\text{tight}}$ defined in~\eqref{eq:Xtight} remains a non-empty compact set,
thereby guaranteeing the applicability of the proposed control scheme.
To empirically validate this theoretical boundary,
a Monte Carlo simulation consisting of $300$ independent trials is conducted.
As illustrated in Figure~\ref{fig:error_distribution},
the density distribution of the tracking error $\|e_x\|_2$ reveals that the empirical 95th percentile (P95) is $9.63^\circ$,
which is upper-bounded by the analytical theoretical radius $R_{prob} = 10.33^\circ$.
Furthermore, Figure~\ref{fig:max_error} visualizes the maximum $\|e_x\|_2$ per trial across the $300$ trials,
with the color map indicating the per-step constraint violation rate for each trial.
Statistical results demonstrate that the overall empirical violation rate across all simulation steps is $4.10\%$,
successfully honoring the predefined $5\%$ risk tolerance limit.

\begin{figure*}
  \centering
  \begin{subfigure}[t]{0.48\linewidth}
    \centering
    \includegraphics[width=\linewidth]{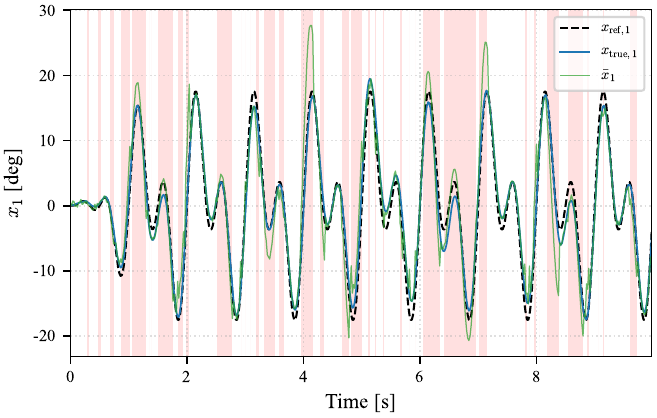}
    \caption{}
    \label{fig:sube}
  \end{subfigure}
  \hfill
  \begin{subfigure}[t]{0.48\linewidth}
    \centering
    \includegraphics[width=\linewidth]{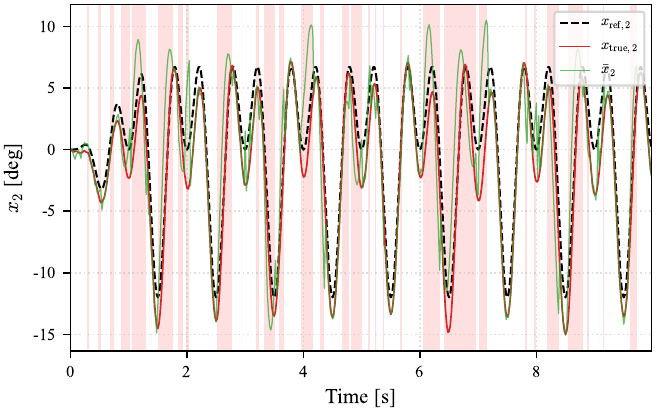}
    \caption{}
    \label{fig:subf}
  \end{subfigure}
  \\
  \begin{subfigure}[t]{0.48\linewidth}
    \centering
    \includegraphics[width=\linewidth]{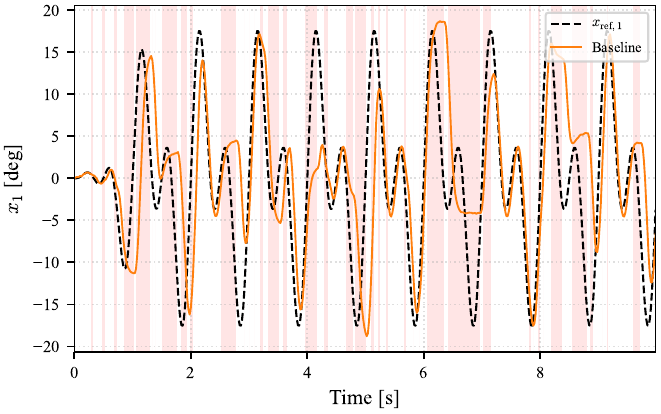}
    \caption{}
    \label{fig:subb2}
  \end{subfigure}
  \begin{subfigure}[t]{0.48\linewidth}
    \centering
    \includegraphics[width=\linewidth]{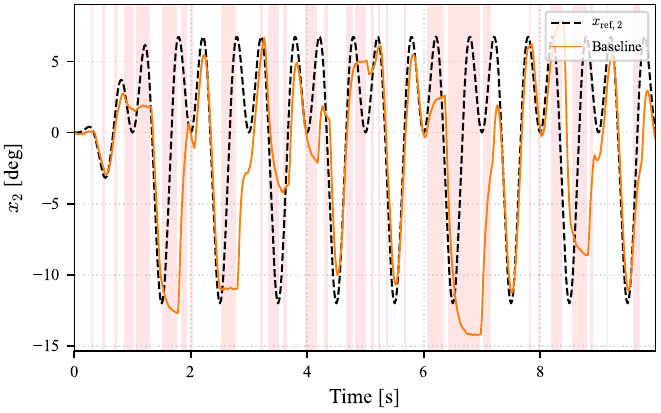}
    \caption{}
    \label{fig:subb3}
  \end{subfigure}
  \caption{Tracking performance of the system states under stochastic intermittent measurements:
  (a) pan angle $x_1$;
  (b) tilt angle $x_2$;
  (c) baseline pan angle $x_1$;
  (d) baseline tilt angle $x_2$.}
  \label{fig:composite2}
\end{figure*}

Figures~\ref{fig:sube} and \ref{fig:subf} depict the dynamic responses of the system under stochastic measurement dropouts (indicated by the red shaded regions).
Overall, the proposed approach achieves root-mean-square errors (RMSEs) of $1.5^\circ$ and $1.8^\circ$ for the pan and tilt axes, respectively.
During intervals of measurement availability, the system continuously acquires sensor updates,
steering the true state trajectory $x_{\text{true}}$ to closely match the reference trajectory $x_{\text{ref}}$,
which yields mean absolute errors (MAEs) of $1.1^\circ$ and $1.5^\circ$ for the pan and tilt axes.
Conversely, when $\theta_k = 1$ (i.e., measurements are unavailable),
the controller switches to the open-loop nominal prediction mode,
where the MAEs are maintained at $1.2^\circ$ and $1.6^\circ$ for the pan and tilt axes, respectively.

To further quantify the performance superiority of the proposed algorithm,
a comparative study is conducted between the presented deep Koopman soft-constrained MPC and a switched PD controller with ZOH baseline approach~\cite{liang2023tracking}.
Quantitative statistics demonstrate that during intervals of measurement availability,
the baseline controller yields MAEs of $5.1^\circ$ and $2.6^\circ$ for the pan and tilt axes, respectively.
Compared to these values, the proposed method achieves error reductions of $78.4\%$ and $42.3\%$.
Furthermore, during intervals of measurement unavailability, the baseline MAEs surge to $6.9^\circ$ and $5.1^\circ$ for the respective axes.
By leveraging the open-loop nominal state evolution, the proposed method achieves reductions of $82.6\%$ and $68.6\%$ in these dropout intervals.
This improvement is corroborated by the absolute tracking error comparison for the pan axis shown in Figure~\ref{fig:sube2}.
Additionally, the cumulative distribution function (CDF) curve of the tracking error (Figure~\ref{fig:subf2}) climbs much more rapidly within the low-error region than that of the baseline,
demonstrating that the tracking errors are tightly bounded within a narrow margin at a high confidence level.
Overall, compared to the baseline RMSEs of $8.4^\circ$ and $6.0^\circ$ for the pan and tilt axes, the proposed method realizes error reductions of $82.1\%$ and $70.0\%$, respectively.

The statistical results of the convex optimization solver across the Monte Carlo trials indicate a median single-step computation time of $2.0\text{ ms}$.
\footnote{All numerical simulations and performance validations are executed within a Python simulation environment running on an AMD Ryzen AI 9 H 365 CPU with 32 GB RAM.}
Considering the $50\text{ Hz}$ sampling frequency of the system, this execution speed satisfies the real-time control requirements.
This confirms the practical engineering feasibility of the proposed method, highlighting its potential for deployment on edge computing devices.

\begin{figure*}
  \centering
  \begin{subfigure}[t]{0.48\linewidth}
    \centering
    \includegraphics[width=\linewidth]{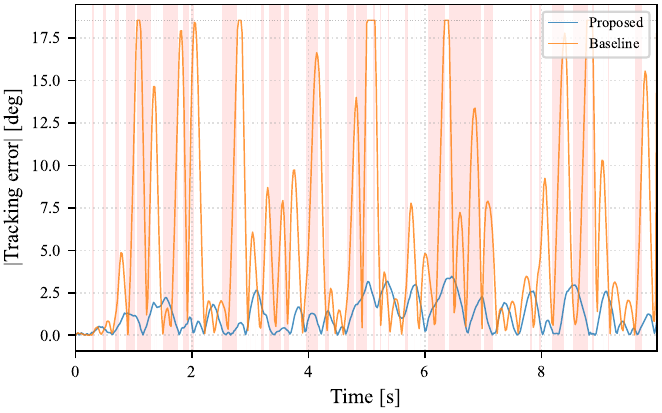}
    \caption{}
    \label{fig:sube2}
  \end{subfigure}
  \hfill
  \begin{subfigure}[t]{0.48\linewidth}
    \centering
    \includegraphics[width=\linewidth]{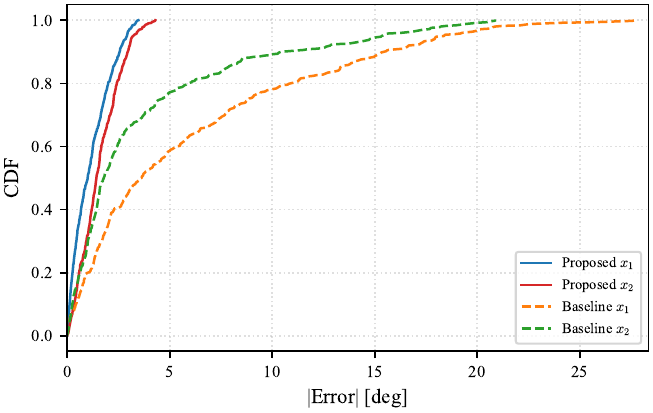}
    \caption{}
    \label{fig:subf2}
  \end{subfigure}
  \caption{Performance comparison with the baseline controller under identical stochastic intermittent measurements:
  (a) absolute tracking error comparison;
  (b) CDF of the tracking error.}
  \label{fig:composite3}
\end{figure*}

\section{Conclusion}
\label{sec:conclusion}
In conclusion, this paper concentrates on the problem of ensuring stability and recursive feasibility for constrained nonlinear dynamical systems
subjected to stochastic intermittent measurements.
A Lipschitz-constrained deep Koopman operator is designed to globally lift the nonlinear dynamics into a computationally tractable linear latent space.
Subsequently, the tracking process under unpredictable measurement dropouts is reformulated as a switched linear system,
and the high-dimensional prediction error is proven to achieve MSUB.
By mapping the steady-state statistical mean-square limit back into the original state space,
a distribution-free probabilistic safety radius is established.
Finally, by integrating this probabilistic boundary with an exact penalty function, a soft-constrained MPC framework is synthesized,
which guarantees recursive feasibility and MSUB while maintaining high computational efficiency,
thereby demonstrating potential for deployment on practical edge computing devices.

\bmsection*{Acknowledgments}
This work was supported in part by the National Natural Science Foundation of China under Grant 62225305, Grant 62527807 and Grant 62403169, in part by the Postdoctoral Fellowship Program of CPSF under Grant GZB20240960, in part by the State Key Laboratory of Robotics and Systems (HIT) under Grant SKLRS202501A04, in part by the State Key Laboratory of Robotics under Grant 2024-O10.

\bibliography{references}

\begin{appendix}
\bmsection{Network Parameters and Configuration\label{app:network_params}}

\begin{table}[!h]
\centering
\caption{Parameters of the Deep Koopman Operator Network and Training Configuration}
\label{tab:koopman_network_parameters}
\begin{tabular*}{\textwidth}{@{\extracolsep\fill}lc@{\extracolsep\fill}}
\toprule
\textbf{Parameter} & \textbf{Value / Bound} \\
\midrule
State dimension $n_x$ & 4 \\
Control input dimension $n_u$ & 2 \\
Linear latent space dimension $n_z$ & 16 \\
Initial learning rate & $1 \times 10^{-3}$ \\
Discount factor $\gamma$ & 0.90 \\
Eigenvalue penalty weight $\alpha_{eig}$ & 5.0 \\
Orthogonal regularization weight $\alpha_{ortho}$ & 4.0 \\
Maximum singular value of hidden layers $\sigma_i$ & 1.0 \\
Global Lipschitz constant of feature mapping $L_\psi$ & $\le 1$ \\
Initial maximum singular value of matrix $A$: $\sigma_{\max}$ & $\le 0.90$ \\
Eigenvalue envelope boundary $\beta$ & 0.92 \\
Steady-state spectral radius of matrix $A$: $\rho(A)$ & 0.8976 \\
\bottomrule
\end{tabular*}
\end{table}

\end{appendix}

\end{document}